\documentclass{article}

\makeatletter
\def\input@path{{styles/}}
\makeatother
\usepackage[preprint]{colm2026_conference}
\usepackage{fontspec}

\normalfont
\usepackage{amsthm}
\usepackage{marvosym}
\usepackage{microtype}
\usepackage{graphicx}
\usepackage{trimclip}
\usepackage{xcolor}
\usepackage{booktabs}
\usepackage{array}
\usepackage{colortbl}
\usepackage{float}
\usepackage{thmtools}
\usepackage{tikz}
\usepackage{tcolorbox}
\usepackage{pgfplots}
\pgfplotsset{compat=1.18}
\usepgfplotslibrary{groupplots}
\usepackage{hyperref}
\usepackage{amsmath}
\usepackage{url}
\usepackage{graphicx} 
\usepackage{float} 
\usepackage{subfigure} 
\usepackage{booktabs}
\usepackage{algorithm}
\usepackage{algpseudocode}
\usepackage{tablefootnote}
\usepackage{amssymb}

\definecolor{abyss}{HTML}{121D36}
\definecolor{polarnight}{HTML}{1A2947}
\definecolor{nebula}{HTML}{2B3F66}
\definecolor{steeltrail}{HTML}{6D87BD}
\definecolor{skytrail}{HTML}{8FA8D8}
\definecolor{starlight}{HTML}{DFE7F5}
\definecolor{warmstar}{HTML}{E8D9C4}
\definecolor{allsparkwordmark}{HTML}{16233F}
\definecolor{allsparkspark}{HTML}{4A659C}
\definecolor{electricblue}{HTML}{3866FF}
\definecolor{covercream}{HTML}{EEF3FA}
\definecolor{coveraccent}{HTML}{3866FF}
\colorlet{pevekpurple}{skytrail}
\colorlet{bargray}{steeltrail}
\colorlet{barlgray}{starlight}

\newfontfamily\outfit[
  Path=assets/fonts/,
  UprightFont=Outfit-Regular.ttf,
  BoldFont=Outfit-SemiBold.ttf
]{Outfit}

\hypersetup{
  colorlinks=true,
  linkcolor=electricblue,
  citecolor=electricblue,
  urlcolor=coveraccent,
  filecolor=electricblue
}
\setcitestyle{numbers,square,comma,sort&compress}

\newcommand{\reporttitle}{\textcolor{electricblue}{PACT}: From Credit Assignment to Critic Alignment 
\par}
\title{\reporttitle}
\author{AllSpark Team}

\begin{document}

\fancyhead{}
\renewcommand{\headrulewidth}{0pt}
\color{abyss}
\thispagestyle{empty}

\vspace*{-0.44in}
\begin{tcolorbox}[
  width=\linewidth,
  colback=covercream,
  colframe=covercream,
  boxrule=0pt,
  arc=14pt,
  outer arc=14pt,
  boxsep=0pt,
  left=20pt,
  right=20pt,
  top=13pt,
  bottom=11pt
]
  {\outfit\fontsize{17}{19}\selectfont\bfseries\centering
    \textcolor{coveraccent}{}\hspace{0.25em}\reporttitle\par}
  \vspace{1.45em}
{\bfseries\centering AllSpark Team\par}
\vspace{0.8em}

{\normalfont\itshape\color{steeltrail}
\fontsize{11}{14}\selectfont
\setlength{\parskip}{0pt}
\centering
``The real justification of these definitions, however,
will reside in their implications.''\par
}
\nocite{shannon1948mathematical}

\vspace{0.4em}
{\normalfont\fontsize{9}{11}\selectfont
\color{steeltrail}\centering
\setlength{\parskip}{0pt}
--- Claude E. Shannon\par
}
\vspace{1em}
  \begingroup
  \normalfont
  \setlength{\parindent}{0pt}
  \setlength{\parskip}{0pt}
  Reinforcement learning has become a central component of large language model (LLM) post-training, yet token-level credit lacks a generally accepted mathematical definition, leaving its relationship to commonly used training signals unclear. We formulate three regularity conditions, namely Completeness, Prefix Consistency, and Neutrality, and prove that they uniquely determine token-level credit. This characterization provides a unified basis for explaining
phenomena across existing algorithms and guides the development
of an improved actor-critic training procedure. Through this lens, an ideal teacher
in On-Policy Distillation (OPD) acts as an implicit critic,
yielding an expected policy gradient proportional to that
induced by token-level credit. Response-level REINFORCE
Leave-One-Out (RLOO) signals match the expected policy-gradient
contribution of token-level credit despite their coarser
granularity. We further establish approximate credit sparsity
under bounded outcome rewards and show how intermediate critic
errors in Generalized Advantage Estimation (GAE) can become
comparable to the underlying credit. These motivate \textcolor{electricblue}{\textbf{P}}olicy \textcolor{electricblue}{\textbf{A}}ligned \textcolor{electricblue}{\textbf{C}}ritic \textcolor{electricblue}{\textbf{T}}raining (PACT),
which adopts an Actor-then-Critic update order to apply
importance sampling correction to critic training and better
align the critic with the updated policy. In agentic mathematical reasoning, PACT achieves 72.87\% average accuracy across four benchmarks, outperforming GRPO and PPO by 8.80 and 13.16 percentage points, respectively. On SWE-bench Verified, PACT achieves a pass rate of 67.4\%,
outperforming PPO, GRPO, and SAO by 2.4, 2.0, and 3.8 percentage
points, respectively.

  \par
  \endgroup

  \vspace{0.65em}
  \noindent
  \begin{minipage}[b]{0.63\linewidth}
    \outfit\fontsize{8.4}{10.2}\selectfont
    \textbf{Date:} September 22, 2026\\[-0.1em]
    \textbf{Github:}
    \href{https://github.com/AllSpark-Research/PACT}{https://github.com/AllSpark-Research/PACT}
  \end{minipage}%
  \hfill
  \begin{minipage}[b]{0.33\linewidth}
    \raggedleft
    \raisebox{-0.30em}{\includegraphics[height=16pt]{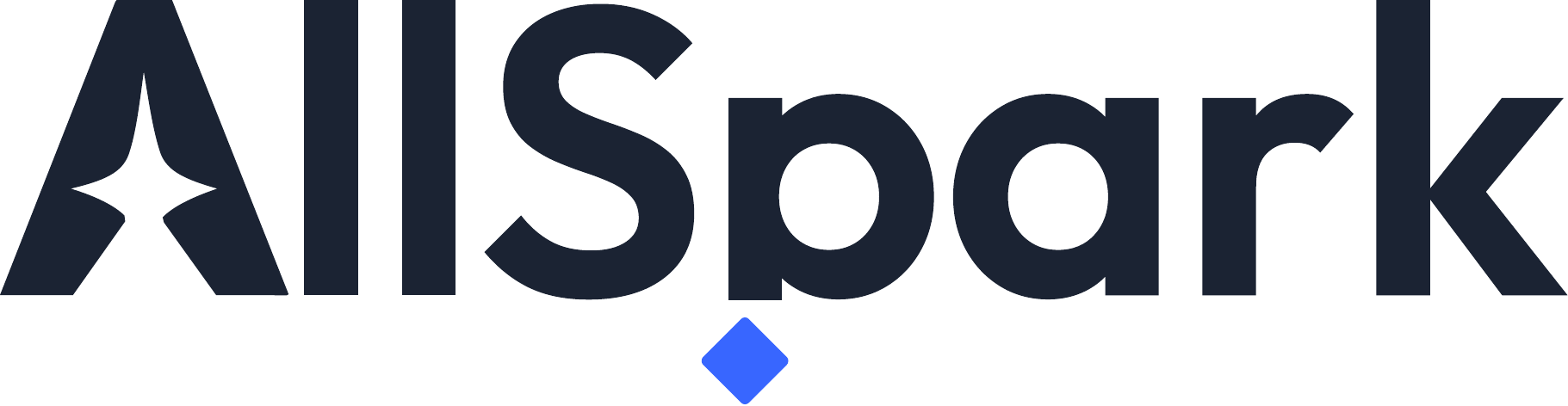}}%
  \end{minipage}
\end{tcolorbox}

\newtheorem{theorem}{Theorem}
\newtheorem{corollary}{Corollary}
\newtheorem{lemma}{Lemma}
\newtheorem{definition}{Definition}
\newtheorem{proposition}{Proposition}

\begin{figure}[h]
    \centering
    \includegraphics[width=\linewidth]
    {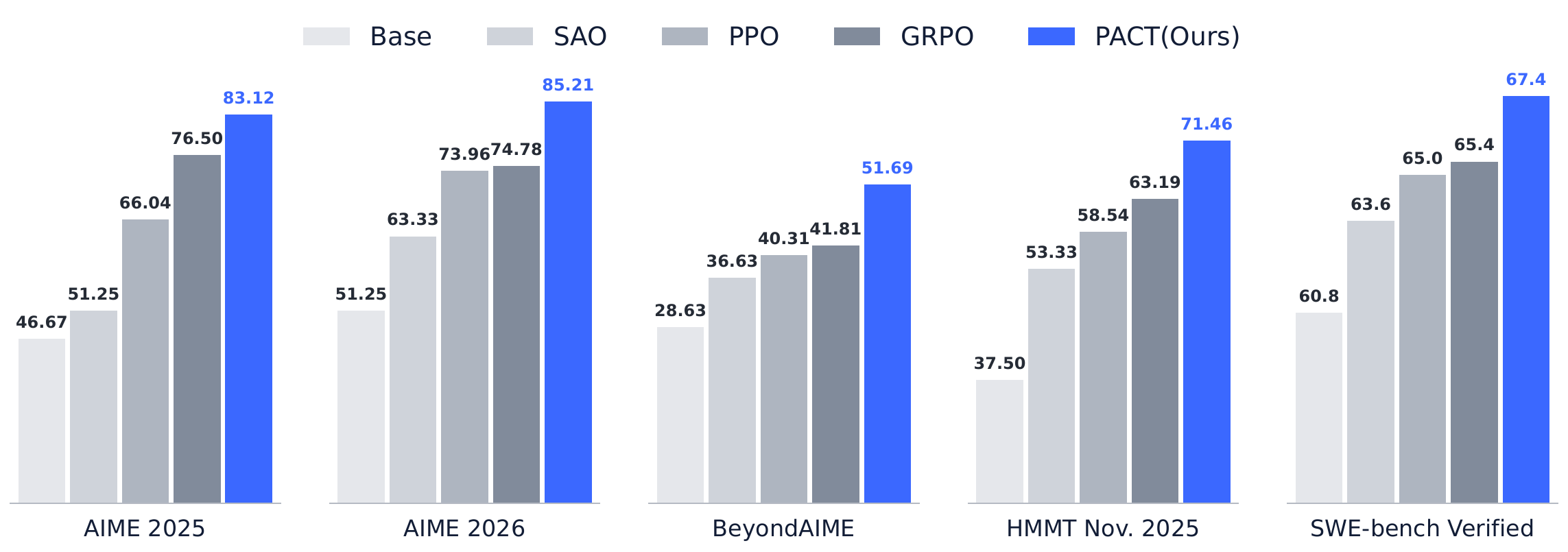}
    \caption{
        Performance comparison of the base models and models trained with
PPO ($\lambda=1$), GRPO, SAO, and PACT on mathematical reasoning
and coding benchmarks. Mathematical reasoning results report
Avg@16 accuracy on AIME 2025, AIME 2026, BeyondAIME, and
HMMT Nov.\ 2025 using Qwen3.5-4B with OpenCode.
Coding results report pass@1 rates on SWE-bench Verified using
Qwen3.6-35B-A3B with Codex.
    }
    \label{fig:main_results}
\end{figure}

\clearpage

\section{Introduction}

Recent advances have demonstrated that reinforcement learning can
substantially improve the capabilities of large language models
across general tasks~\citep{guo2025deepseek, ma2026general,cheng2026revisiting},
general agents~\citep{wang2025ragen},
and coding agents~\citep{da2025agent,wei2025toward}. In these settings, a language model may generate a
long sequence of tokens and repeatedly interact with an external environment.
However, the training signal is often a scalar outcome reward revealed only
after the completion of the trajectory, while optimization updates are applied
at the level of individual generated tokens. This difference in granularity gives rise to a fundamental credit assignment
problem of determining how the final outcome should be attributed to the tokens
in the trajectory. Furthermore, despite its central role in reinforcement
learning, credit itself lacks a generally accepted mathematical
characterization, and existing methods operationalize it through different
algorithm-dependent quantities~\citep{pignatelli2023survey}.

A common formulation models autoregressive language model reinforcement learning as a token-level Markov Decision Process, where the complete generation history is treated as the state and the next generated token as the action~\citep{ramamurthy2022reinforcement}. In interactive settings, taking the complete history of the prompt,
generated tokens, and environment observations as the state yields
a Markov representation. However, this representation alone does not
characterize how the realized outcome should be attributed across
the generated tokens. In particular, it does not describe how the statistical information relevant to the final reward changes as the trajectory unfolds. Credit assignment therefore requires an additional characterization beyond the Markov formulation.

In this work, we formulate three regularity conditions for credit
assignment, namely Completeness, Prefix Consistency, and Neutrality,
and prove that credit is uniquely determined under these conditions.
The resulting representation subsumes related forms previously
derived under specific objectives or algorithmic
constructions~\citep{arjona2019rudder,kazemnejad2024vineppo,setlur2025rewarding}. The same uniqueness result holds for credit assignment at coarser
granularities, such as the turn level, with the corresponding
representation obtained by aggregating token-level credit over
consecutive segments.

Our contributions are twofold:
\begin{itemize}
    \item \textbf{A unique representation theorem and its consequences.}
    We establish a unique representation theorem for credit under three
    regularity conditions, namely Completeness, Prefix Consistency,
    and Neutrality. This representation provides a unified perspective
    on several phenomena in LLM reinforcement learning. 
    
    Under an ideal
    teacher, the On-Policy Distillation (OPD)~\citep{lu2025onpolicydistillation} update is equivalent in expectation, up to a scaling
    factor, to the policy-gradient update induced by the unique credit,
    with the teacher acting as an implicit critic.

    For REINFORCE Leave-One-Out (RLOO)~\citep{ahmadian2024back}, the
    response-level signal, although not itself token-level credit,
    induces the same expected policy-gradient contribution as the unique
    credit representation.

    We further establish approximate sparsity of
    token-level credit under bounded outcome rewards, providing a
    perspective on the sensitivity of fine-grained credit estimation
    to critic errors and the empirical difficulty of Generalized Advantage Estimation (GAE) based
    Actor-Critic methods in long-horizon settings~\citep{schulman2015high}.

    \item \textbf{Policy Aligned Critic Training.}
Motivated by these findings, we propose Policy Aligned Critic
Training (PACT). PACT adopts an Actor-then-Critic update order
to enable importance sampling correction in critic training,
better aligning the critic with the updated policy. We also use Binary Cross Entropy (BCE)
instead of Mean Squared Error (MSE) to train the critic to better approximate the true
values.

On four mathematical reasoning benchmarks, PACT achieves
72.87\% average accuracy, outperforming GRPO and PPO by 8.80 and
13.16 percentage points, respectively. On SWE-bench Verified,
PACT achieves a pass rate of 67.4\%, outperforming PPO,
GRPO, and SAO by 2.4, 2.0, and 3.8 percentage points, respectively.
\end{itemize}

All proofs are deferred to the appendices, which also contain
additional discussion and a detailed review of related work.
\section{Unique Representation of Token-Level Credit}
\label{sec2}
Credit assignment can be defined at different granularities. For example,
one may assign rewards to complete responses, interaction turns, or individual
tokens. In autoregressive language model reinforcement learning, however,
coarser-grained credit assignments can be viewed as special cases of
token-level credit by aggregating consecutive tokens into larger units. Therefore, we shall firstly focus on the finest-grained formulation and study token-level
credit assignment.

\subsection{Notation: Reward Information Flow}
We first introduce the necessary notation to formalize the information flow of
rewards during autoregressive generation. Our formulation covers both standard
autoregressive generation and agentic interaction. Given a prompt \(q\), a realized trajectory
takes the form
\[
Y=(q,T_1,O_1,T_2,O_2,\ldots,T_\tau,O_\tau),
\]
where \(T_i\) is the \(i\)-th token generated by the policy and \(O_i\) is the
observation returned by the environment after \(T_i\). An observation may be a
tool response, an environment transition, user feedback, or any other
information revealed to the model. We allow \(O_i=\varnothing\); consequently,
ordinary autoregressive language generation is recovered as the special case
in which every observation is empty. The terminal reward is defined as a measurable function of the complete trajectory,
\(R=\mathcal{R}(Y).\)
In particular, \(R\) need not be assigned to any specific token or
observation. 

Let
\(
\mathcal{F}_0=\sigma(q),
\mathcal{F}_i
=
\sigma(q,T_1,O_1,\ldots,T_i,O_i)
\)
denote the information available after the \(i\)-th token and its associated
observation have been revealed. Here, $\sigma(\cdot)$ denotes the generated sigma-algebra. We define the conditional reward prediction
at this point as
\(
V_i=\mathbb{E}[R\mid\mathcal{F}_i].
\)

\subsection{Regularity Conditions of Credit Assignment}

Let \(C_i\) denote the credit assigned to the \(i\)-th generated token
\(T_i\). Although environmental observations \(O_i\) are incorporated into the
information filtration and affect future token generation, credit assignment
aims to attribute the final outcome only to the model's generated tokens,
rather than to environment-provided information. Here we also define the accumulated token
credit up to step \(i\) as
\[
S_i=\sum_{j=1}^{i}C_j,\qquad S_0=0.
\]
We characterize token-level credit assignments through the following three
regularity conditions.

\paragraph{Completeness.}
The assigned credits should fully explain the deviation of the final reward
from its initial prediction:
\[
\sum_{i=1}^{\tau}C_i
=
R-\mathbb{E}[R\mid\mathcal F_0].
\]

\paragraph{Prefix Consistency.}
Credit assigned to a generated prefix should only reflect the information
contained in that prefix. In particular, once a prefix has been generated, the
credit accumulated by this prefix should be fixed from the perspective of the
available information. Different realizations of future tokens or observations
should be attributed to future decisions, rather than modifying the credit
already assigned to previous tokens.
Formally, for any two trajectories \(Y\) and \(\widetilde{Y}\), if they share
the same prefix information up to step \(i\),
\[
(q,T_1,O_1,\ldots,T_i,O_i)
=
(\widetilde{q},\widetilde{T}_1,\widetilde{O}_1,\ldots,
\widetilde{T}_i,\widetilde{O}_i),
\]
then the accumulated credits at step \(i\) should satisfy
\(
S_i(Y)=S_i(\widetilde{Y}).
\) 
\paragraph{Neutrality.}
A token credit should neither systematically overestimate nor underestimate
the contribution of the corresponding token from the perspective of the
available information. The expected credit assigned to the next token should be zero:
\[
\mathbb E[C_i\mid\mathcal F_{i-1}]=0.
\]

\subsection{Unique Representation Theorem}

The following theorem is the central result of our characterization and provides the foundation for the subsequent analysis. It establishes that a token-level credit assignment satisfying the three regularity conditions introduced above exists and is unique, and that the unique assignment is given by the differences of a martingale. We further establish in Appendix~\ref{app:necessity_conditions} that all three conditions are necessary for uniqueness by showing that omitting any one admits alternative credit assignments.

\begin{restatable}[Unique Representation of Token-Level Credit]
    {theorem}{uniquecredit}
\label{thm:unique_credit}
Let \(L\) denote the maximum number of generated tokens in a trajectory, and let \(\tau\le L\) be the stopping time induced by the generation process. Assume that \(R\) is integrable and \(\mathcal F_\tau\)-measurable. Define
\[
V_i:=\mathbb E[R\mid\mathcal F_i],
\qquad i=0,\ldots,\tau.
\]
There exists a unique integrable token-level credit assignment,
up to almost-sure equality,
\(\{C_i\}_{i=1}^{\tau}\) satisfying Completeness, Prefix Consistency, and
Neutrality. 

It admits the representation
\[
C_i
=
V_i-V_{i-1}
=
\mathbb E[R\mid\mathcal F_i]
-
\mathbb E[R\mid\mathcal F_{i-1}]
\quad \text{a.s.},
\qquad i=1,\ldots,\tau.
\]
Moreover, \(\{C_i\}_{i=1}^{\tau}\) is a martingale difference sequence with
respect to \(\{\mathcal F_i\}_{i=0}^{\tau}\).
\end{restatable}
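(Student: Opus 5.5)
The plan has two parts: existence, by verifying the three conditions directly for $C_i:=V_i-V_{i-1}$, and uniqueness, by showing any admissible assignment has partial sums $S_i=V_i-V_0$. Throughout I will handle the random horizon by extending to the fixed horizon $L$. Set $\mathcal F_i:=\mathcal F_\tau$ and $C_i:=0$ for $i>\tau$, so that $V_i=V_{i\wedge\tau}$ becomes a stopped martingale on $\{0,\dots,L\}$. Since $\tau$ is a stopping time and $R$ is $\mathcal F_\tau$-measurable, I expect $V_\tau=R$ a.s.

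\textbf{Existence.} Integrability is immediate, because $V_i$ is a conditional expectation of the integrable $R$, so $C_i\in L^1$. Completeness follows by telescoping: $\sum_{i=1}^\tau C_i=V_\tau-V_0=R-\mathbb E[R\mid\mathcal F_0]$. For Prefix Consistency, note $S_i=V_i-V_0$ is $\mathcal F_i$-measurable, so by the Doob--Dynkin lemma it is a measurable function of the prefix $(q,T_1,O_1,\dots,T_i,O_i)$. Hence two trajectories sharing that prefix receive the same $S_i$. Neutrality is the tower property: $\mathbb E[C_i\mid\mathcal F_{i-1}]=\mathbb E[V_i\mid\mathcal F_{i-1}]-V_{i-1}=0$. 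This same identity, together with adaptedness and integrability, shows that $\{C_i\}$ is a martingale difference sequence.

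\textbf{Uniqueness.} Let $\{C_i\}$ be any integrable assignment satisfying the three conditions, with partial sums $S_i$.
\begin{enumerate}
\item I first interpret Prefix Consistency as saying $S_i$ is a function of the step-$i$ prefix, i.e.\ $S_i$ is $\mathcal F_i$-measurable (again via Doob--Dynkin, read in reverse). Thus $\{S_i\}$ is adapted.
\item By Neutrality and adaptedness, $\mathbb E[S_i\mid\mathcal F_{i-1}]=S_{i-1}+\mathbb E[C_i\mid\mathcal F_{i-1}]=S_{i-1}$, so $\{S_i\}$ is a martingale with $S_0=0$.
\item The process $M_i:=S_i-(V_i-V_0)$ is then also a martingale. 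By Completeness, $M_\tau=0$ a.s.
\item Because the horizon is bounded by $L$, backward induction from $M_L=M_\tau=0$ gives $M_{i-1}=\mathbb E[M_i\mid\mathcal F_{i-1}]=0$ for each $i$. Equivalently, optional stopping yields $M_i=\mathbb E[M_\tau\mid\mathcal F_i]=0$.
\item Therefore $S_i=V_i-V_0$ a.s. for all $i$, and differencing gives $C_i=V_i-V_{i-1}$ a.s.
\end{enumerate}

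\textbf{Main obstacle.} The delicate step is making the random-horizon bookkeeping rigorous. This means defining $C_i$ and the conditions on the events $\{i\le\tau\}$, checking that $\{i\le\tau\}\in\mathcal F_{i-1}$ so that Neutrality can be applied to the stopped increments $C_i\mathbf 1_{\{i\le\tau\}}$, and justifying that Prefix Consistency (a pathwise statement) is equivalent to $\mathcal F_i$-measurability of $S_i\mathbf 1_{\{i\le\tau\}}$. I will handle this by working with the stopped filtration $\mathcal F_{i\wedge\tau}$ throughout. Once that is in place, the boundedness $\tau\le L$ makes the backward induction purely finite, so no uniform integrability argument is needed.
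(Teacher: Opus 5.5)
Your overall route is the paper's. Existence is shown by telescoping, by applying Doob--Dynkin to an $\mathcal F_i$-measurable version of $V_i$, and by the tower property. Uniqueness is shown by proving that the partial sums form a martingale with terminal value $R-V_0$, which the bounded horizon pins down by backward induction.

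The random-horizon bookkeeping you single out as the main obstacle is routine. The paper keeps $\mathcal F_i$ for all $i\le L$, zero-extends $C_i$, and uses $V_i=R$ on $\{\tau\le i\}$. Your stopped filtration $\mathcal F_{i\wedge\tau}$ also works; this is what ``$\mathcal F_i:=\mathcal F_\tau$ for $i>\tau$'' has to mean, since $\tau$ is random. It works because Neutrality and the martingale property pass to the coarser filtration by the tower property.

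\textbf{The gap is step 1 of uniqueness.} Prefix Consistency is a pathwise statement. It only says that $S_i$ is constant on each fiber $H_i^{-1}(\{h\})$ of the prefix map $H_i=(q,T_1,O_1,\dots,T_i,O_i)$. So $S_i=g_i\circ H_i$ for \emph{some} function $g_i$, with no guarantee that $g_i$ is measurable.

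The easy direction of Doob--Dynkin needs $g_i$ to be measurable. In general, fiber-constancy does not imply $\sigma(H_i)$-measurability, not even up to null sets. Take $\Omega=[0,1]$ with Lebesgue measure, and let $H$ be the identity into $[0,1]$ equipped with the countable--cocountable $\sigma$-algebra. Every fiber is a singleton, yet $\mathbf 1_{[0,1/2]}$ is not a.s.\ equal to any $\sigma(H)$-measurable function, since such functions are a.s.\ constant. Steps 2--5 rest entirely on this adaptedness, so the argument stalls at step 1.

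The missing idea is the one the paper isolates in a separate lemma. Prompts and observations are finite strings over a finite vocabulary, so $H_i$ takes values in a countable set $\mathcal H_i$, which can be given the discrete $\sigma$-algebra. Then every $g_i$ is measurable, and $S_i=g_i\circ H_i$ is $\mathcal F_i$-measurable.

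Your proposed remedy of working with $\mathcal F_{i\wedge\tau}$ handles only the random horizon and does nothing for this measurability issue. Once you add the countability argument, steps 2--5 go through as you wrote them.
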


\begin{restatable}[Aggregation of the Unique Token-Level Representation]{corollary}{Cor}
\label{cor:coarse_credit}
Consider any partition of the token sequence into consecutive segments
\[
\mathcal I_1,\mathcal I_2,\ldots,\mathcal I_K,
\]
where each segment \(\mathcal I_k\) contains a set of consecutive token
indices. Define the segment-level credit as
\[
C^{(k)}
=
\sum_{i\in\mathcal I_k} C_i .
\]
The resulting segment-level credit is
\[
C^{(k)}
=
\sum_{i\in\mathcal I_k}
\left(
V_i-V_{i-1}
\right)
=
V_{b_k}-V_{a_k-1},
\]
where \(a_k\) and \(b_k\) denote the first and last token indices of segment
\(\mathcal I_k\).

\end{restatable}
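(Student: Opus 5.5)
The statement immediately above is Corollary~\ref{cor:coarse_credit}, and I would prove it as a direct consequence of Theorem~\ref{thm:unique_credit}. The plan is to substitute the unique representation $C_i = V_i - V_{i-1}$ (valid almost surely for $i=1,\ldots,\tau$) into the definition $C^{(k)}=\sum_{i\in\mathcal I_k} C_i$. Then I would use that $\mathcal I_k$ consists of consecutive indices, $\mathcal I_k=\{a_k,a_k+1,\ldots,b_k\}$, so the sum telescopes:
\[
\sum_{i=a_k}^{b_k}(V_i-V_{i-1}) = V_{b_k}-V_{a_k-1}.
\]

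The only points needing care are measure-theoretic bookkeeping. First, the equalities hold almost surely. Since there are at most $L$ indices, a finite union of null sets is null, so the identity holds a.s. simultaneously for all segments.

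Second, $\tau$ is random, so the partition should be understood as a partition of $\{1,\ldots,\tau\}$. The endpoints $a_k,b_k$ may then depend on the trajectory. I would handle this by arguing pathwise: on each outcome $\omega$ outside the exceptional null set, the partition is a fixed finite family of consecutive integer intervals in $\{1,\ldots,\tau(\omega)\}$. The telescoping identity is purely algebraic for each such $\omega$. Alternatively, if one prefers, $V_i$ can be extended by $V_i=V_\tau$ for $i>\tau$ without changing anything.

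I would also remark, for consistency, that summing over all $k$ gives $V_{\tau}-V_0 = R-\mathbb E[R\mid\mathcal F_0]$, using $\mathcal F_\tau$-measurability of $R$. This shows that segment credits inherit Completeness. Each $C^{(k)}$ is also a martingale increment over the segment, so $\mathbb E[C^{(k)}\mid \mathcal F_{a_k-1}]=0$ by the tower property when the endpoints are deterministic or stopping times. This is not required by the statement but supports the claimed uniqueness at coarser granularity.

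There is no real obstacle here. The only step requiring attention is stating precisely how random segment endpoints are handled. I would resolve it with the pathwise argument above, noting that the algebraic telescoping needs no measurability of $a_k,b_k$ beyond the values being well defined.
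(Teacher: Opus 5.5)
Your proposal is correct and follows the paper's proof: the paper substitutes the unique representation $C_i=V_i-V_{i-1}$ from Theorem~\ref{thm:unique_credit} into each segment sum and telescopes over the consecutive indices $a_k,\ldots,b_k$. Your extra bookkeeping (the finite union of null sets and the pathwise handling of random endpoints) is more careful than the paper's one-line argument, but it takes the same route.
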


In particular, existing coarse-grained credit assignment schemes, such as
turn-level attribution, can be recovered as special cases by
aggregating the unique token-level representation over appropriate token
segments.

\section{Interpreting Existing RL Algorithms through the Unique Credit Representation}

In this section, we use the unique credit representation to revisit
several phenomena in LLM reinforcement learning. We first examine
the relationship between OPD and critics, then study credit
granularity in RLOO and credit estimation in GAE. All these analyses together motivate Policy Aligned Critic Training (PACT), introduced in
the next section.

\subsection{The Teacher in On-Policy Distillation is an Implicit Critic}
\label{sec:opd_credit}

On-Policy Distillation trains on student-generated trajectories
using dense token-level supervision from a teacher
\citep{lu2025onpolicydistillation}. Prior work further showed that the
OPD objective can be algebraically reformulated as dense
KL-constrained RL, in which the teacher--student log-density ratio
plays the role of a token-level advantage \citep{yang2026learning}.
This optimization-level equivalence, however, does not by itself
guarantee that the induced signal is consistent with the outcome reward
\(R\), or that it constitutes a correct token-level credit assignment.
We establish this missing credit-level connection by identifying an
ideal teacher under which OPD induces, up to scaling, the same
policy-gradient update as the unique token-level credit in
Theorem~\ref{thm:unique_credit}.

Moreover, recent work suggests that effective on-policy distillation requires a teacher that improves upon the student while remaining compatible with its on-policy distribution \citep{li2026rethinking}. We capture these requirements with an idealized teacher defined through KL-regularized reward improvement. 

Fix a policy \(\pi=\pi_\theta\) and a token position \(t\leq\tau\). Let
\[
p_t(a)
=
\pi_\theta(a\mid\mathcal F_{t-1})
\]
denote the current policy distribution over the vocabulary
\(\mathcal V\). Define the token-action value
\[
Q_t^\pi(a)
=
\mathbb E_\pi
\left[
R
\middle|
\mathcal F_{t-1},T_t=a
\right].
\]
The expectation includes the subsequent environment observation and all
future tokens and observations generated under \(\pi\). 

\begin{definition}[Ideal Teacher]
\label{def:ideal_opd_teacher}
Let \(\beta>0\). At each prefix \(\mathcal F_{t-1}\), the ideal teacher
distribution is defined by
\[
q_t^\star
\in
\arg\max_{q\in\Delta(\mathcal V)}
\left\{
\mathbb E_{a\sim q}
\left[
Q_t^\pi(a)
\right]
-
\beta
D_{\mathrm{KL}}(q\|p_t)
\right\}.
\]
We assume that \(R\) is bounded and that \(p_t\) has full support over
\(\mathcal V\). The distribution \(q_t^\star\) is defined relative to the
current policy and is held fixed during the corresponding OPD update.
\end{definition}

For a teacher distribution \(q_t\), sampled-token OPD assigns the signal
\[
A_t^{\mathrm{OPD}}(q_t)
=
\log
\frac{q_t(T_t)}{p_t(T_t)},
\qquad
T_t\sim p_t.
\]
Let
\[
Z_t
=
\nabla_\theta
\log\pi_\theta(T_t\mid\mathcal F_{t-1})
\]
denote the policy score. The on-policy update direction induced by
\(A_t^{\mathrm{OPD}}(q_t)\) is
\[
\begin{aligned}
G_t^{\mathrm{OPD}}(q_t)
:=
\mathbb E_\pi
\left[
Z_t A_t^{\mathrm{OPD}}(q_t)
\middle|
\mathcal F_{t-1}
\right] 
=
\mathbb E_\pi
\left[
Z_t
\log\frac{q_t(T_t)}{p_t(T_t)}
\middle|
\mathcal F_{t-1}
\right].
\end{aligned}
\]

The following theorem identifies the OPD teacher as an implicit critic
under the ideal-teacher assumption. Unlike a conventional critic
represented by a value head, the OPD teacher provides credit through
its KL-based objective. From this perspective, the empirical success of OPD further
highlights the role of critics, even when no explicit value head
is used.

\begin{restatable}[Teacher Credit Equivalence]{theorem}{opd}
\label{prop:opd_credit_equivalence}
Let \(q_t^\star\) be the ideal teacher in
Definition~\ref{def:ideal_opd_teacher}. Then
\[
G_t^{\mathrm{OPD}}(q_t^\star)
=
\frac{1}{\beta}
\mathbb E_\pi
\left[
Z_t C_t^\pi
\middle|
\mathcal F_{t-1}
\right]
\qquad\text{a.s.},
\]
where
\[
C_t^\pi
=
V_t^\pi-V_{t-1}^\pi,
\qquad
V_t^\pi
=
\mathbb E_\pi[R\mid\mathcal F_t]
\]
is the unique token-level credit from
Theorem~\ref{thm:unique_credit}.
\end{restatable}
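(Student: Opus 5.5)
The proof has two parts: compute the ideal teacher in closed form, then use the fact that the score function has conditional mean zero to discard every term that depends only on the prefix.

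\emph{Step 1: closed form of the ideal teacher.} Since \(R\) is bounded, \(Q_t^\pi\) is bounded, and \(\mathcal V\) is finite. The objective in Definition~\ref{def:ideal_opd_teacher} is strictly concave in \(q\) over the simplex, so it is maximized by the Gibbs distribution
\[
q_t^\star(a)=\frac{p_t(a)\exp\bigl(Q_t^\pi(a)/\beta\bigr)}{\mathcal Z_t},
\qquad
\mathcal Z_t=\sum_{b\in\mathcal V}p_t(b)\exp\bigl(Q_t^\pi(b)/\beta\bigr).
\]
This is the Donsker--Varadhan (Gibbs) variational principle; it can also be checked with a Lagrange multiplier for the simplex constraint, or by writing the objective as \(\beta\log\mathcal Z_t-\beta D_{\mathrm{KL}}(q\|q_t^\star)\). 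Full support of \(p_t\) guarantees that \(q_t^\star\) has full support as well, so the log-ratio is well defined. The maximizer is unique, so the \(\arg\max\) causes no ambiguity. Consequently
\[
\log\frac{q_t^\star(T_t)}{p_t(T_t)}=\frac{1}{\beta}Q_t^\pi(T_t)-\log\mathcal Z_t,
\]
where \(\log\mathcal Z_t\) is \(\mathcal F_{t-1}\)-measurable.

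\emph{Step 2: remove prefix-measurable terms.} Because \(p_t\) has full support and the vocabulary is finite, \(\mathbb E_\pi[Z_t\mid\mathcal F_{t-1}]=\sum_a\nabla_\theta p_t(a)=\nabla_\theta 1=0\). Any \(\mathcal F_{t-1}\)-measurable factor \(X\) can be pulled out of the conditional expectation, so \(\mathbb E_\pi[Z_tX\mid\mathcal F_{t-1}]=0\). Applying this to \(X=\log\mathcal Z_t\) and to \(X=V_{t-1}^\pi/\beta\) gives
\[
G_t^{\mathrm{OPD}}(q_t^\star)=\frac1\beta\,\mathbb E_\pi\bigl[Z_t\,(Q_t^\pi(T_t)-V_{t-1}^\pi)\mid\mathcal F_{t-1}\bigr].
\]

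\emph{Step 3: replace \(Q\) by \(V_t\) via the tower property.} The crux is to identify \(Q_t^\pi(T_t)\) with \(\mathbb E_\pi[V_t^\pi\mid\mathcal F_{t-1},T_t]\). The point needing care is that \(\mathcal F_t\) also contains the observation \(O_t\), whereas \(Q_t^\pi\) conditions only on \(T_t\) and averages over \(O_t\).

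Let \(\mathcal G_t=\sigma(\mathcal F_{t-1},T_t)\), so that \(\mathcal F_{t-1}\subseteq\mathcal G_t\subseteq\mathcal F_t\). By the tower property, \(Q_t^\pi(T_t)=\mathbb E_\pi[R\mid\mathcal G_t]=\mathbb E_\pi[V_t^\pi\mid\mathcal G_t]\). The score \(Z_t\) is \(\mathcal G_t\)-measurable, so
\[
\mathbb E_\pi[Z_tQ_t^\pi(T_t)\mid\mathcal F_{t-1}]=\mathbb E_\pi\bigl[\mathbb E_\pi[Z_tV_t^\pi\mid\mathcal G_t]\mid\mathcal F_{t-1}\bigr]=\mathbb E_\pi[Z_tV_t^\pi\mid\mathcal F_{t-1}].
\]
Substituting into Step 2 yields the claimed expression with \(C_t^\pi=V_t^\pi-V_{t-1}^\pi\), which Theorem~\ref{thm:unique_credit} identifies as the unique credit.

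Two technical points remain. First, integrability: \(R\) is bounded, so all the values are bounded, and on a finite vocabulary \(Z_t\) is integrable as a finite sum; this is needed for the conditional expectations to exist. Second, the event \(t\le\tau\): it is \(\mathcal F_{t-1}\)-measurable because \(\tau\) is a stopping time, so it factors out of every conditional expectation above.
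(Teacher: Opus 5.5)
Your proposal is correct and follows essentially the same route as the paper's proof. Both derive the Gibbs form $q_t^\star\propto p_t\exp(Q_t^\pi/\beta)$, use $\mathbb E_\pi[Z_t\mid\mathcal F_{t-1}]=0$ to drop the prefix-measurable terms $\log\mathcal Z_t$ and $V_{t-1}^\pi$, and apply the tower property through $\sigma(\mathcal F_{t-1},T_t)$ (with respect to which $Z_t$ is measurable) to replace $Q_t^\pi(T_t)$ by $V_t^\pi$. Your extra care (strict concavity for uniqueness of the maximizer, the intermediate $\sigma$-algebra that handles $O_t$, and the $\mathcal F_{t-1}$-measurability of $\{t\le\tau\}$) only makes explicit what the paper's Lagrangian argument leaves implicit.
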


\subsection{Group-Level Baselines as Gradient-Equivalent Credit Surrogates}

The unique representation theorem shows that the unique token-level credit
depends on the evolution of conditional reward predictions:
\(
C_i=V_i-V_{i-1}.
\)
However, recent group-based RL algorithms for language models, such as GRPO~\citep{shao2024deepseekmath} and RLOO~\citep{ahmadian2024back}, construct
advantages using multiple sampled responses from the same prompt. The resulting
baseline is defined at the response level, since it aggregates rewards from
different sampled trajectories. This response-level baseline is then
broadcast to every token position within the same response during policy
optimization. This introduces a granularity gap between response-level baseline estimation
and token-level credit assignment.

We now examine group-level baseline methods, taking RLOO as a representative
example. We show that, although the RLOO baseline is constructed at the
response level, the resulting token-wise policy gradient contribution
coincides in expectation with that of the unique token-level credit.

\begin{theorem}[Gradient Unbiasedness of the RLOO Estimator]
\label{thm:rloo_unbiased}

Conditioned on a prompt \(q\), let \(G\geq2\) trajectories be
sampled independently from the current policy \(\pi_\theta\).
Let \(R_j\) denote the reward of trajectory \(j\).
For trajectory \(i\), define the leave-one-out baseline
\[
\bar R_{-i}
=
\frac{1}{G-1}\sum_{j\neq i}R_j .
\]
Then its token-level policy gradient estimator is equivalent in expectation
to the gradient induced by the unique token-level credit representation:
\[
\mathbb E\left[
\nabla_\theta\log\pi_\theta(T_t|\mathcal F_{t-1})
(R_i-\bar R_{-i})
\right]
=
\mathbb E\left[
\nabla_\theta\log\pi_\theta(T_t|\mathcal F_{t-1})
C_t
\right].
\]
\end{theorem}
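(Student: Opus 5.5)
The plan is to split $R_i-\bar R_{-i}$ into two parts and treat them separately. The first part, $R_i - V_{t-1}$, is compared with $C_t$. The second part, $V_{t-1}-\bar R_{-i}$, should contribute nothing to the expected gradient. Throughout, $\mathcal F_{t-1}$ and $V_t=\mathbb E[R_i\mid\mathcal F_t]$ refer to trajectory $i$, and we write $Z_t=\nabla_\theta\log\pi_\theta(T_t\mid\mathcal F_{t-1})$ as in the OPD subsection. We use the convention that $Z_t=0$ on $\{t>\tau\}$, where no $t$-th token exists; this is consistent because $\{t\le\tau\}\in\mathcal F_{t-1}$. We assume rewards are integrable and scores have finite second moments, so all products below are integrable.

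\textbf{Step 1: the score function annihilates anything known before $T_t$.} For any integrable $\mathcal F_{t-1}$-measurable $B$, we have $\mathbb E[Z_tB]=\mathbb E\bigl[B\,\mathbb E[Z_t\mid\mathcal F_{t-1}]\bigr]=0$. This holds because $\mathbb E[Z_t\mid\mathcal F_{t-1}]=\sum_a \nabla_\theta\pi_\theta(a\mid\mathcal F_{t-1})=\nabla_\theta 1=0$. Two consequences follow immediately.
\begin{itemize}
\item Taking $B=V_{t-1}$ gives $\mathbb E[Z_tV_{t-1}]=0$.
\item The leave-one-out baseline also drops out. The trajectories are independent given $q$, so each $R_j$ with $j\ne i$ is conditionally independent of trajectory $i$ given $q$. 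Conditioning on $\sigma(q)$ together with trajectory $i$'s prefix, we get $\mathbb E[\bar R_{-i}\mid \mathcal F_{t-1}^{(i)},q]=\mathbb E[R\mid q]=V_0$, which is $\mathcal F_{t-1}$-measurable. Hence $\mathbb E[Z_t\bar R_{-i}]=\mathbb E[Z_tV_0]=0$. More simply, $\mathbb E[Z_t\bar R_{-i}\mid q]=\mathbb E[Z_t\mid q]\,\mathbb E[\bar R_{-i}\mid q]=0$ by conditional independence.
\end{itemize}
This is the point where $G\ge2$ and the leave-one-out structure matter. Without them, the baseline would involve $R_i$ and would not factor out.

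\textbf{Step 2: $R_i$ can be replaced by $V_t$.} The score $Z_t$ is $\mathcal F_t$-measurable, since it depends on $\mathcal F_{t-1}$ and $T_t$. By the tower property, $\mathbb E[Z_tR_i]=\mathbb E\bigl[Z_t\,\mathbb E[R_i\mid\mathcal F_t]\bigr]=\mathbb E[Z_tV_t]$. This is the step that turns the future part of the response-level reward into token-level information. It is where Completeness shows up implicitly: the telescoping $R-V_0=\sum_{s}C_s$ is exactly what lets us discard the terms $C_s$ with $s>t$. The same fact can be checked directly, since $\mathbb E[Z_tC_s]=\mathbb E\bigl[Z_t\,\mathbb E[C_s\mid\mathcal F_{s-1}]\bigr]=0$ for $s>t$ by the martingale difference property in Theorem~\ref{thm:unique_credit}.

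\textbf{Step 3: combine.} Putting the two steps together,
\[
\mathbb E[Z_t(R_i-\bar R_{-i})]=\mathbb E[Z_tV_t]-0=\mathbb E[Z_tV_t]-\mathbb E[Z_tV_{t-1}]=\mathbb E[Z_tC_t].
\]
The last equality uses the representation $C_t=V_t-V_{t-1}$ from Theorem~\ref{thm:unique_credit}.

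The main obstacle is Step 1's treatment of $\bar R_{-i}$. It needs a careful statement of the probability space for $G$ independent trajectories, so that $V_t$ and $\mathcal F_t$ refer to trajectory $i$ alone and the independence given $q$ can be invoked. A secondary point is the variable-length issue. Setting $Z_t=0$ for $t>\tau$ handles it, because $\{t\le\tau\}\in\mathcal F_{t-1}$ and so the identity $\mathbb E[Z_t\mid\mathcal F_{t-1}]=0$ still holds.
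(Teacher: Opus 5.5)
Your proof is correct and runs on the same mechanism as the paper's: the score identity removes every $\mathcal F_{t-1}$-measurable term, the tower property removes the future part of $R_i$, and conditional independence given $q$ removes the leave-one-out baseline. The only difference is bookkeeping: you apply the tower property once at $\mathcal F_t$, replacing $R_i$ by $V_t$ and subtracting $V_{t-1}$. The paper instead expands $R_i-V_0=\sum_k C_k$ and shows $\mathbb E[Z_tC_k]=0$ for each $k\neq t$ separately.
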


The preceding equivalence holds only in expectation and does not
imply equal statistical efficiency. The conditional expectation
$V_t=\mathbb{E}[R\mid\mathcal{F}_t]$ minimizes the mean squared
prediction error among $\mathcal{F}_t$-measurable predictors.
In contrast, the response-level RLOO baseline retains randomness
from finite outcomes. In long-horizon settings, these fluctuations
can be substantial relative to the local credit signal. This
motivates estimating credit at a finer granularity.

\subsection{Approximate Credit Sparsity and Error Sensitivity of GAE}

Recent empirical observations suggest that GAE with
\(\lambda\) close to one can be beneficial in LLM reinforcement
learning. DeepSeek-R1~\citep{guo2025deepseek} reports that
PPO~\citep{schulman2017proximal} with \(\lambda=1\)
outperforms the commonly used setting \(\lambda=0.95\).
SAO~\citep{hou2026single} adopts a length-adaptive GAE
coefficient that approaches one as the response length increases.
To understand these observations, we first establish an
approximate sparsity property of credit and then examine
how critic estimation errors enter GAE.

For any bounded reward, we can apply a positive affine transformation to
normalize it into \([0,1]\) without changing the optimal policy. Therefore, we
consider \(R\in[0,1]\) without loss of generality. In this case, the reward
variance satisfies
\(
\operatorname{Var}(R)\leq\frac14 .
\)

\begin{restatable}[Approximate Credit Sparsity]{theorem}{ACS}
\label{lem:credit_dilution}
For any fixed prompt \(q\),
\[
\mathbb E \left[\sum_{i=1}^{\tau}C_i^2\mid\mathcal F_0
\right]
=
\operatorname{Var}(R\mid\mathcal F_0)
\leq \frac14 .
\]
Consequently, for any \(\epsilon>0\),
\[
\mathbb E
\left[
\sum_{i=1}^{\tau}
\mathbf 1\{|C_i|>\epsilon\}
\middle|
\mathcal F_0
\right]
\leq
\frac{1}{4\epsilon^2}.
\]
\end{restatable}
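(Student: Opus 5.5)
By Theorem~\ref{thm:unique_credit}, the credit is the martingale difference sequence $C_i=V_i-V_{i-1}$ with $V_i=\mathbb E[R\mid\mathcal F_i]$. The plan is to use orthogonality of martingale increments, with Completeness (which gives $V_\tau=R$) turning the telescoping sum into $R-V_0$. The Chebyshev-type count then follows from Markov's inequality applied termwise.

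\textbf{Step 1: extend to a fixed horizon.} The stopping time $\tau$ is random, so I would extend the sequence to the fixed horizon $L$. Set $C_i:=0$ for $i>\tau$, i.e.\ $C_i=\mathbf 1\{i\le\tau\}(V_i-V_{i-1})$. Equivalently, use the stopped martingale $V_{i\wedge\tau}$, whose increments are $C_i\mathbf 1\{\tau\ge i\}$. Since $\{\tau\ge i\}\in\mathcal F_{i-1}$, these increments are still martingale differences. Because $R\in[0,1]$, all $V_i$ are bounded, so every $C_i$ is square integrable.

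\textbf{Step 2: orthogonality of increments.} For $j<i$, $C_j$ is $\mathcal F_{i-1}$-measurable. Hence
\[
\mathbb E[C_iC_j\mid\mathcal F_0]=\mathbb E\big[C_j\,\mathbb E[C_i\mid\mathcal F_{i-1}]\big|\mathcal F_0\big]=0
\]
by Neutrality. Expanding the square then gives
\[
\mathbb E\Big[\Big(\sum_{i=1}^{L}C_i\Big)^2\Big|\mathcal F_0\Big]=\mathbb E\Big[\sum_{i=1}^{\tau}C_i^2\Big|\mathcal F_0\Big].
\]

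\textbf{Step 3: identify the variance and bound it.} By Completeness, $\sum_i C_i=R-\mathbb E[R\mid\mathcal F_0]$, so the left side of Step 2 equals $\operatorname{Var}(R\mid\mathcal F_0)$. For $R\in[0,1]$ with conditional mean $m$, we have $\mathbb E[R^2\mid\mathcal F_0]\le m$. Therefore the conditional variance is at most $m-m^2\le 1/4$.

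\textbf{Step 4: the counting bound.} Use the pointwise inequality $\mathbf 1\{|C_i|>\epsilon\}\le C_i^2/\epsilon^2$. Summing over $i$, taking conditional expectation, and applying the bound from Step 3 gives the claim.

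\textbf{Main obstacle.} The only delicate point is the random horizon $\tau$. The cross terms must vanish even though the number of summands is random, and this is exactly what Step 1 handles: writing $\mathbf 1\{\tau\ge i\}$ inside the increments makes the sum over a fixed range $1,\dots,L$. The facts that $\{\tau\ge i\}$ is $\mathcal F_{i-1}$-measurable and that $R$ is $\mathcal F_\tau$-measurable (so $V_\tau=R$) are what make the argument go through.
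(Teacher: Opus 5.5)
Your proposal is correct and follows essentially the same route as the paper: zero-extend the credits to the deterministic horizon $L$, use the martingale-difference property to kill the cross terms so that $\mathbb E[\sum_i C_i^2\mid\mathcal F_0]=\mathbb E[(R-V_0)^2\mid\mathcal F_0]$, bound the conditional variance via $R^2\le R$, and finish with the pointwise inequality $\epsilon^2\mathbf 1\{|C_i|>\epsilon\}\le C_i^2$. Your remark that $\{\tau\ge i\}\in\mathcal F_{i-1}$ keeps the truncated increments martingale differences is a slightly more explicit version of the paper's observation that $V_i-V_{i-1}=0$ on $\{i>\tau\}$, which makes the zero extension consistent.
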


We now examine how this sparsity property relates to GAE
in the outcome-only reward setting, where the reward is
revealed only after the generation trajectory is complete.
Let the critic estimate be
\(
\widehat V_i=V_i+\varepsilon_i,
\)
where \(\varepsilon_i\) denotes the value estimation error. With \(\gamma=1\) and the terminal prediction set to
\(\widehat V_\tau=R\), the TD residual becomes
\(
\widehat\delta_i
=
\widehat V_i-\widehat V_{i-1}
=
C_i+\varepsilon_i-\varepsilon_{i-1}.
\)
For GAE with parameter \(\lambda\), the estimated advantage is computed as
\(
\widehat A_t^\lambda
=
\sum_{i=t}^{\tau}
\lambda^{i-t}\widehat\delta_i .
\)
Substituting the above TD decomposition gives
\(
\widehat A_t^\lambda
=
\sum_{i=t}^{\tau}
\lambda^{i-t}C_i
-\varepsilon_{t-1}
+
(1-\lambda)
\sum_{i=t}^{\tau-1}
\lambda^{i-t}\varepsilon_i ,
\)
where the terminal value error is assumed to be zero.

This decomposition separates the true credit signal from critic estimation
errors. The first term corresponds to the accumulated token-level credit,
while the remaining terms arise from imperfect value estimation. When
\(\lambda<1\), intermediate critic errors are retained through the last term.
However, Theorem~\ref{lem:credit_dilution} bounds the expected
number of credit increments exceeding any fixed magnitude,
independently of the maximum response length. Consequently, the critic
error term can become comparable to, or even dominate, the true credit signal.

In contrast, when \(\lambda=1\), the intermediate critic error term vanishes:
\(
\widehat A_t^1
=
\sum_{i=t}^{\tau}C_i-\varepsilon_{t-1}
=
R-\widehat V_{t-1}.
\)
Therefore, \(\lambda=1\) eliminates intermediate value
estimation errors, leaving only the prefix value error
\(-\varepsilon_{t-1}\).

Following the analysis above, our proposed method, i.e., PACT, uses $\lambda=1$.

\section{Policy Aligned Critic Training}
\label{sec:PACT}

\begin{figure}[t]
    \centering
    \includegraphics[width=1.0\linewidth]{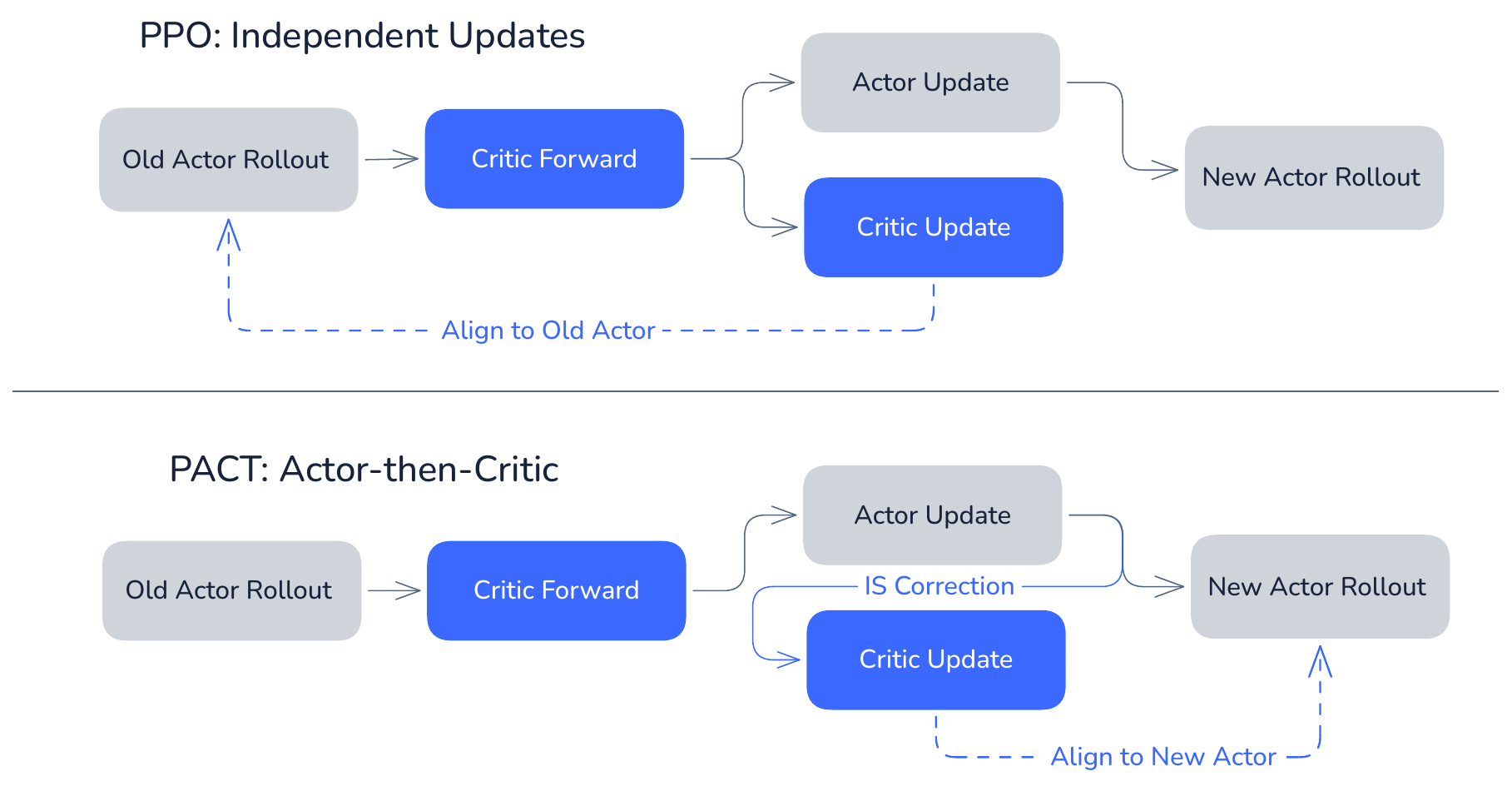}
    \caption{
    PPO and PACT training workflows.
    }
    \label{fig:PACT}
\end{figure}

\subsection{Motivation}
Theorems~\ref{thm:unique_credit} and~\ref{lem:credit_dilution}
highlight the importance of accurate value estimation for
fine-grained credit assignment. By Theorem~\ref{thm:unique_credit}, the unique
token-level credit under policy \(\pi\) is
\[
C_i^\pi
=
V_i^\pi-V_{i-1}^\pi,
\qquad
V_i^\pi
=
\mathbb E_\pi[R\mid\mathcal F_i].
\]
Thus, recovering local credit requires not only accurate value predictions,
but also values corresponding to the policy currently being optimized.
Moreover, Theorem~\ref{lem:credit_dilution} shows that most local credits can
be small in long-horizon generation. Consequently, even a modest value error
or policy mismatch may dominate the underlying credit signal.

Under current RL training frameworks, PPO-style training can
introduce a policy mismatch between the actor and the critic. Let \(\pi_k\) denote the policy used to
generate the trajectories \(\mathcal D_k\) in iteration \(k\). The critic
values used to compute advantages on \(\mathcal D_k\) are produced before the
critic is updated on \(\mathcal D_k\). Hence, these values are generated by
critic parameters learned from the previous trajectories
\(\mathcal D_{k-1}\sim\pi_{k-1}\):
\[
\mathcal D_k\sim\pi_k,
\qquad
\widehat V_{\phi_{k-1}}
\approx
V^{\pi_{k-1}}.
\]
Although the critic is subsequently updated on \(\mathcal D_k\), the resulting
critic is not used to recompute the advantages for the current actor update.
After the actor is updated from \(\pi_k\) to \(\pi_{k+1}\), the critic is
therefore again one policy update behind.

This policy--critic lag is especially consequential when token-level credit is
obtained by differencing consecutive conditional values. It motivates
training procedures that improve value estimation while explicitly
synchronizing the critic with the updated policy. We shall develop such a procedure
in this section.

\subsection{Probabilistic Value Estimation}

As discussed above and proved in
Appendix~\ref{app:reward_normalization}, any bounded reward can be normalized
to \([0,1]\) without changing the optimal policy. We therefore assume
\(R\in[0,1]\), which also implies
\(
V_i^\pi
=
\mathbb E_\pi[R\mid\mathcal F_i]
\in[0,1].
\)

Instead of the conventional mean squared error, we parameterize the critic as
\(
\widehat V_{\phi,i}
=
\sigma(z_{\phi,i})
\)
and optimize the soft binary cross-entropy loss
\[
\mathcal L_{\mathrm{BCE}}(\phi)
=
\mathbb E
\left[
-R\log\widehat V_{\phi,i}
-(1-R)\log(1-\widehat V_{\phi,i})
\right].
\]
For \(R\in[0,1]\), BCE and MSE have the same optimal prediction:
\[
\arg\min_{v\in[0,1]}
\mathbb E[
-R\log v-(1-R)\log(1-v)
\mid\mathcal F_i]
=
\arg\min_{v\in\mathbb R}
\mathbb E[(v-R)^2\mid\mathcal F_i]
=
V_i^\pi.
\]
At the endpoints, BCE is defined by its limits, allowing
the value \(+\infty\), with \(0\log 0=0\). Thus, using BCE does not change the value being estimated.
Classification-based objectives have shown favorable empirical performance
for value estimation in deep reinforcement learning
\cite{farebrother2024stop}. Furthermore, a controlled critic-pretraining ablation, reported in
Section~\ref{sec:opcp}, shows that the probabilistic BCE
critic converges substantially faster than the conventional MSE critic.

\subsection{Off-Policy Critic Synchronization}

As shown in Figure~\ref{fig:PACT}, Policy Aligned Critic Training (PACT) introduces an
\emph{Actor-then-Critic} dependency through importance-corrected
critic targets, whereas standard PPO allows independent actor
and critic updates. Within each
iteration, the current critic first provides the value predictions required
for actor optimization. After all actor updates have been completed, we
perform one additional forward pass over the same rollout batch using the
updated actor. Together with the log-probabilities recorded before the update,
this produces importance ratios between the pre-update and post-update
policies. We then train the critic using these ratios, so that the iteration
ends with the critic better aligned to the updated actor.

Let \(\mu=\pi_k\) be the policy before the actor update and
\(\pi=\pi_{k+1}\) the updated policy. For a prefix \(\mathcal F_{t-1}\), define the continuation
importance ratio
\[
I_t
=
\prod_{k=t}^{\tau}
\frac{\pi(T_k\mid\mathcal F_{k-1})}
{\mu(T_k\mid\mathcal F_{k-1})}.
\]
Assuming unchanged environment dynamics and absolute
continuity of the continuation distributions, a change
of measure gives
\[
V_{t-1}^\pi
=
\mathbb E_\pi[R\mid\mathcal F_{t-1}]
=
\mathbb E_\mu[I_tR\mid\mathcal F_{t-1}].
\]

We now regard \(I_tR\) as a single random variable. As established in the
previous subsection, binary cross-entropy recovers the conditional mean:
for any integrable random variable \(Y\) whose conditional mean lies in
\([0,1]\),
\[
\mathbb E[Y\mid\mathcal F]
=
\arg\min_{x\in[0,1]}
\mathbb E
\left[
\mathcal L_{\mathrm{BCE}}(x,Y)
\mid\mathcal F
\right].
\]
Throughout this subsection, the conditional expected BCE is extended
to \(x=0\) and \(x=1\) by taking one-sided limits after the conditional
expectation. For a sigmoid-parameterized critic, optima at \(0\) and
\(1\) are approached as the logit tends to \(-\infty\) and \(+\infty\),
respectively.

Applying this result with \(Y=I_tR\) immediately gives
\[
V_{t-1}^\pi
=
\arg\min_{x\in[0,1]}
\mathbb E_{\mu}
\left[
\mathcal L_{\mathrm{BCE}}(x,I_tR)
\mid\mathcal F_{t-1}
\right].
\]
Therefore, trajectories sampled before the actor update can be used to train
a critic for the updated actor by replacing the original reward target \(R\)
with the importance-corrected target \(I_tR\). Although an individual realization of \(I_tR\) need not lie in \([0,1]\),
this does not alter the conditional minimizer above; a logit-space gradient
analysis is provided in Appendix~\ref{app:importance_weighted_bce}.
In practice, the exact continuation ratio can have high variance for long
responses. We use the detached current-token importance ratio and mask out
token-level critic losses whose ratios fall outside
$[\rho_{\min},\rho_{\max}]$. These
ratios require only one additional forward pass after actor training and no
additional rollout generation, yielding a stable and inexpensive surrogate
for exact policy synchronization.

\section{Experiments}
\begin{table*}[t]
    \centering
    \caption{
        Mathematical reasoning performance on Avg@16
        accuracy (\%).
    }
    \label{tab:agentic_math}
    \small
    \setlength{\tabcolsep}{6pt}
    \begin{tabular*}{\linewidth}{@{\extracolsep{\fill}}lccccc@{}}
        \toprule
        Method
        & AIME 2025
        & AIME 2026
        & BeyondAIME
        & HMMT Nov.\ 2025
        & Average \\
        \midrule
        Base Model
        & 46.67 & 51.25 & 28.63 & 37.50 & 41.01 \\
        GRPO (\(\epsilon_{\mathrm{high}}=0.28\))
        & 76.50 & 74.78 & 41.81 & 63.19 & 64.07 \\
        PPO (\(\lambda=0.95\))
        & 32.33 & 29.38 & 17.86 & 25.67 & 26.31 \\
        PPO (\(\lambda=1.0\))
        & 66.04 & 73.96 & 40.31 & 58.54 & 59.71 \\
        SAO
        & 51.25 & 63.33 & 36.63 & 53.33 & 51.14 \\
        \midrule
        PACT w/o IS
        & 76.04 & 82.50 & 51.38 & 61.04 & 67.74 \\
        \textbf{PACT}
        & \textbf{83.12} & \textbf{85.21}
        & \textbf{51.69} & \textbf{71.46}
        & \textbf{72.87} \\
        \bottomrule
    \end{tabular*}
\end{table*}

\subsection{Experimental Setup}

All RL training experiments use Dressage~\citep{dressage_github},
an agentic RL framework built on slime~\citep{slime_github}
that integrates agent execution, trajectory collection,
and policy training.

\paragraph{Training Details.}
For mathematical reasoning, we train
Qwen3.5-4B~\citep{qwen3.5} on a subset of
DAPO-Math-17k~\citep{yu2026dapo} using the
OpenCode~\citep{opencode_github} agentic harness, with
final-answer correctness as the outcome reward.
The training subset contains \(3{,}200\) problems, constructed
by prioritizing problems that the initial policy fails under
pass@1 evaluation and randomly sampling additional problems
from the remaining pool.
For agentic coding, we train
Qwen3.6-35B-A3B~\citep{qwen36_35b_a3b}
on OpenSWE~\citep{fu2026davincienvopensweenvironment}
using a Codex~\citep{codex_github} agent through
Harbor~\citep{Harbor_Framework}, with terminal rewards
provided by the task verifier.
For both tasks, each rollout round produces \(512\)
trajectories. GRPO samples \(8\) trajectories for each of
\(64\) prompts. The optimization batch size is \(128\),
yielding four minibatches per rollout round. SAO uses DIS with importance ratio ranges of \([0.7,6.0]\)
for mathematical reasoning and \([0.6,3.0]\) for coding.
PACT uses the same actor-side DIS range as SAO for mathematical
reasoning and PPO clipping for coding.
For critic training, PACT masks out samples whose importance
ratios fall outside \([0,6]\). For both tasks, the context window is \(128\mathrm{k}\) tokens,
and the maximum generation length per interaction turn is
\(64\mathrm{k}\) tokens.

\paragraph{Evaluation.}
For mathematical reasoning, we evaluate on AIME 2025,
AIME 2026, HMMT Nov.\ 2025~\citep{balunović2026matharenaevaluatingllmsuncontaminated},
and BeyondAIME~\citep{bytedance_seed_2025_beyondaime},
reporting Avg@16 accuracy.
We compare PACT with GRPO using Clip-Higher,
PPO with \(\lambda\in\{0.95,1.0\}\), and SAO.
For agentic coding, we evaluate on
SWE-bench Verified~\citep{jimenez2024swebench}
and compare PACT with SAO, GRPO with Clip-Higher,
and PPO with \(\lambda=1.0\).
\subsection{Main Results}

\begin{table*}[t]
    \centering
    \caption{
        SWE-bench Verified pass rates (\%)
        with Qwen3.6-35B-A3B.
    }
    \label{tab:agentic_coding}
    \small
    \setlength{\tabcolsep}{8pt}
    \begin{tabular}{@{}lccccc@{}}
        \toprule
        Method
        & Base Model
        & GRPO
        & PPO (\(\lambda=1.0\))
        & SAO
        & \textbf{PACT} \\
        \midrule
        Pass@1 Rate
        & 60.8 & 65.4 & 65.0 & 63.6 & \textbf{67.4} \\
        \bottomrule
    \end{tabular}
\end{table*}

Tables~\ref{tab:agentic_math} and~\ref{tab:agentic_coding}
report the results on mathematical reasoning and agentic coding,
respectively. On mathematical reasoning, PACT achieves the
highest accuracy on all four benchmarks, averaging \(72.87\%\)
and outperforming GRPO, PPO with \(\lambda=1.0\), and SAO
by \(8.80\), \(13.16\), and \(21.73\) percentage points,
respectively. PPO with \(\lambda=1.0\) also outperforms
\(\lambda=0.95\), which undergoes policy collapse during
training, consistent with our analysis of intermediate critic
errors in GAE. On SWE-bench Verified, PACT achieves the highest
pass rate of \(67.4\%\), outperforming GRPO,
PPO with \(\lambda=1.0\), and SAO by \(2.0\), \(2.4\),
and \(3.8\) percentage points, respectively.

\subsection{Ablation Studies}

\begin{figure*}[t]
    \centering
    \includegraphics[width=1.0\textwidth]
        {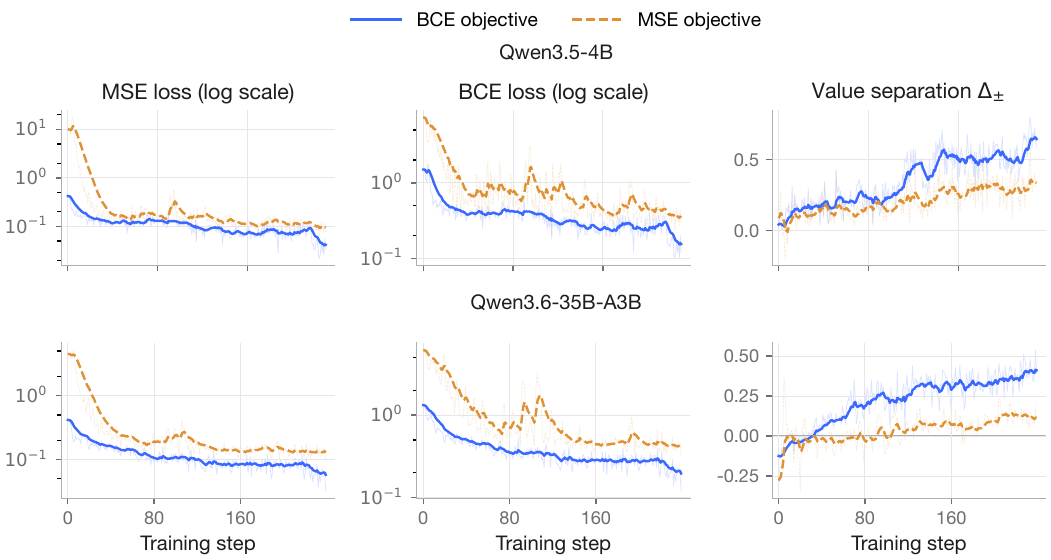}
    \caption{
        Fixed-policy critic pretraining with BCE and MSE
        objectives. Critics are trained from the same
        initialization on the same on-policy rollout data.
        We compare BCE loss, MSE loss, and value separation
        \(\Delta_{\pm}\).
    }
    \label{fig:critic_pretraining}
\end{figure*}

\paragraph{Critic Objective.}
\label{sec:opcp}

We isolate the effect of the critic objective by fixing the
rollout policy and training BCE and MSE critics from the same
initialization on the same on-policy rollout data.
We conduct this comparison in both settings described above:
Qwen3.5-4B on mathematical reasoning and Qwen3.6-35B-A3B
on agentic coding.

As shown in Figure~\ref{fig:critic_pretraining}, across both
model scales, BCE-trained critics achieve lower BCE and MSE
losses and greater value separation than their MSE-trained
counterparts.
Here, \(\Delta_{\pm}\) denotes the mean predicted value on
positive samples minus that on negative samples.
The larger separation indicates that BCE-trained critics
assign more distinct values to successful and unsuccessful
trajectories.

\paragraph{Importance Sampling Correction.}

PACT w/o IS retains the Actor-then-Critic update order,
BCE critic objective, and actor-side optimization settings,
but removes the critic-target importance sampling correction.
As shown in Table~\ref{tab:agentic_math}, adding importance
correction improves average accuracy from \(67.74\%\)
to \(72.87\%\), a gain of \(5.13\) percentage points,
with improvements on all four benchmarks.
It also leads to more stable training, as illustrated by
the training reward curves in Appendix~\ref{sec:training_dynamics}.
This comparison supports the effectiveness of importance
sampling correction in PACT.

\section{Conclusion}
We studied credit assignment under three regularity conditions,
namely Completeness, Prefix Consistency, and Neutrality,
and proved that credit is uniquely determined under these conditions.
The resulting representation provides a unified basis for
explaining phenomena across existing algorithms.
Under an ideal teacher, OPD yields an expected policy gradient
proportional to that induced by credit, identifying the teacher
as an implicit critic. RLOO yields the same expected policy
gradient contribution despite its coarser granularity.
We further established approximate credit sparsity under bounded
outcome rewards and analyzed how intermediate critic errors
affect GAE in long-horizon settings.

Motivated by these findings, we proposed Policy Aligned Critic
Training (PACT). Its Actor-then-Critic update order enables
importance sampling correction in critic training to better
align the critic with the updated policy. PACT also uses BCE
instead of MSE to improve value estimation.
Across four mathematical reasoning benchmarks, PACT achieves
\(72.87\%\) average accuracy, outperforming GRPO and PPO by
\(8.80\) and \(13.16\) percentage points, respectively.
On SWE-bench Verified, PACT achieves a pass rate of \(67.4\%\),
outperforming GRPO, PPO, and SAO by \(2.0\), \(2.4\), and
\(3.8\) percentage points, respectively.
Together, these results show how a mathematical characterization
of credit can explain existing algorithms and guide improvements
to actor-critic training.

\bibliographystyle{styles/colm2026_conference}
\bibliography{bibliography/references}

\clearpage
\appendix
\section{Contributors}

\begingroup
\renewcommand{\thefootnote}{%
  \ifcase\value{footnote}\or *\or \Letter\fi}

Jiayan Fu, Hang Xu\footnotemark[1], Yong Zhang,
Zhaokai Luo, Yao Hu, Dongyan Zhao\footnotemark[2],
Mu Chuan\footnotemark[2]

\footnotetext[1]{Project Leader.}
\footnotetext[2]{Corresponding Authors.
\nolinkurl{zhaody@pku.edu.cn},
\nolinkurl{muchuan1@xiaohongshu.com}.}
\endgroup

\section{Additional Related Work}
\label{app:RW}

\subsection{Reinforcement Learning for Large Language Models}

Reinforcement learning has become an important component of large language model post-training, supporting both preference alignment and the development of complex reasoning capabilities \citep{ouyang2022training,guo2025deepseek}. Its early applications focused primarily on preference alignment \citep{zhang2025survey}. Representative RLHF methods train scalar reward models from human preferences over model responses and use PPO to optimize language model policies \citep{stiennon2020learning,ouyang2022training,bai2022training}. Their PPO implementations use learned value functions to construct token-level advantage estimates \citep{stiennon2020learning,ouyang2022training}. More recently, reinforcement learning has increasingly been used to improve complex reasoning through outcome-level or verifiable rewards \citep{shao2024deepseekmath,guo2025deepseek,team2025kimi,zhang2025survey}.

To avoid the computational cost and optimization difficulty of training a critic, ReMax~\citep{li2023remax} uses the reward of a greedily decoded response as its baseline, whereas RLOO~\citep{ahmadian2024back} constructs a leave-one-out baseline from the rewards of other responses sampled for the same prompt. GRPO~\citep{shao2024deepseekmath} instead replaces the learned value model with group-relative advantage estimates. REINFORCE++~\citep{hu2025reinforce++} combines critic-free optimization with global advantage normalization. Subsequent work refines different components of critic-free optimization, including clipping, sampling, and loss aggregation in DAPO~\citep{yu2026dapo}, normalization-bias correction in Dr.\ GRPO~\citep{liu2025understanding}, and sequence-level importance weighting and clipping in GSPO~\citep{zheng2025groupsequencepolicyoptimization}.

Despite the prevalence of critic-free optimization, recent work has continued to develop value-based methods for long and heterogeneous reasoning trajectories. VC-PPO~\citep{yuan2025s} attributes observed failures of PPO in long-chain-of-thought training to value-initialization bias and the attenuation of terminal reward signals in GAE, and accordingly introduces value pretraining and decoupled GAE. VAPO~\citep{yue2025vapo} builds on these techniques and introduces length-adaptive GAE for responses of varying lengths. The high and variable cost of generating long trajectories has also motivated asynchronous training systems \citep{fu2026areal,hou2026single}. AReaL~\citep{fu2026areal} decouples rollout generation from policy optimization and explicitly accounts for stale training samples, whereas SAO~\citep{hou2026single} combines single-rollout asynchronous training with a learned value model and a token-level GAE estimator. In contrast to these studies of optimization algorithms and training systems, our work develops a general characterization of token-level credit in long-sequence LLM reinforcement learning and studies its implications for response-level baselines and learned critics.

\subsection{Credit Assignment in Reinforcement Learning}

Credit assignment concerns how earlier decisions contribute to
subsequent outcomes and has been recognized as a fundamental
difficulty in learning systems since early work on artificial
intelligence~\citep{minsky1961steps}.
Many classical reinforcement learning methods connect decisions
to outcomes through sampled returns and temporal structure.
Monte Carlo methods use sampled returns as targets for value
estimation~\citep{sutton1998reinforcement}, whereas
REINFORCE-style policy gradient methods use observed rewards
or returns to construct gradient
estimates~\citep{williams1992simple}.
Temporal-difference learning bootstraps from temporally
successive predictions~\citep{sutton1988learning}, while
eligibility traces distribute subsequent TD errors to previously
visited states and actions~\citep{sutton1998reinforcement}.
Actor-critic methods use learned value functions to estimate
policy gradients~\citep{sutton1999policy}.
Generalized advantage estimation combines TD residuals to
control the bias-variance trade-off in advantage
estimation~\citep{schulman2015high}.

When consequential decisions are separated from their outcomes
by long temporal gaps, subsequent work has developed more
targeted mechanisms.
RUDDER learns return-equivalent reward redistributions through
contribution analysis of return
predictions~\citep{arjona2019rudder}, while Temporal Value
Transport uses attentional memory retrieval to transport value
estimates from later events to relevant events in the distant
past~\citep{hung2019optimizing}.
Hindsight Credit Assignment assigns credit to past decisions
according to the likelihood that they led to an observed
outcome~\citep{harutyunyan2019hindsight}.
Counterfactual Credit Assignment constructs future-conditioned
baselines to disentangle an action's influence from external
factors and subsequent actions~\citep{mesnard2020counterfactual},
whereas COCOA estimates an action's contribution by asking
whether a subsequent reward would still have been obtained
under an alternative action~\citep{meulemans2023igottenrewardlongterm}.

\citet{pignatelli2023survey} organize these developments into
several methodological families, including approaches based
on temporal contiguity, return decomposition, and future
conditioning.
Collectively, these methods address the propagation of delayed
rewards, the decomposition of trajectory returns, or the
estimation of action influence, but they do so through different
target quantities and estimation procedures.
In long-horizon reinforcement learning for large language models,
this problem becomes especially salient, as many outcome-supervised
methods use a single response-level or terminal reward to train
a large number of token-level decisions.
Our work studies the mathematical characterization of credit
under three regularity conditions and uses the resulting unique
representation to analyze existing algorithms and guide
critic training.

\section{Detailed Proofs}
\subsection{Reward Normalization and Variance Bound}
\label{app:reward_normalization}

For a bounded reward \(R\) satisfying
\(R_{\min}\le R\le R_{\max}\) with \(R_{\min}<R_{\max}\),
define the normalized reward by
\[
\widetilde R
=
\frac{R-R_{\min}}{R_{\max}-R_{\min}}.
\]
Since this is a positive affine transformation, i.e.,
\[
\widetilde R=aR+b,\qquad a>0,
\]
maximizing the expected reward is equivalent to maximizing the expected
normalized reward.

It remains to show the variance bound for \(R\in[0,1]\). Let
\[
\mu=\mathbb E[R].
\]
Since \(0\leq R\leq1\), we have
\[
R^2\leq R.
\]
Therefore,
\[
\operatorname{Var}(R)
=
\mathbb E[R^2]-\mu^2
\leq
\mu-\mu^2
=
\mu(1-\mu)
\leq
\frac14.
\]

\subsection{Unique Representation Theorem}
\subsubsection{Prefix Consistency Implies Adaptedness}
\label{app:prefix_consistency}

\begin{lemma}
\label{lem:prefix_consistency_adaptedness}
Suppose that the prompt and all environmental observations are represented
as finite token sequences over a finite vocabulary. If a token-level credit
assignment satisfies Prefix Consistency, then its accumulated-credit process
\(\{S_i\}_{i=0}^{L}\) is adapted to the prefix filtration
\(\{\mathcal F_i\}_{i=0}^{L}\); that is, \(S_i\) is
\(\mathcal F_i\)-measurable for every \(i=0,\ldots,L\).
\end{lemma}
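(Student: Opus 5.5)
The plan is to use the Doob--Dynkin factorization: a real random variable is \(\sigma(X)\)-measurable if and only if it can be written as \(g(X)\) for some Borel (here, arbitrary) function \(g\) on the range of \(X\). Fix \(i\in\{0,\ldots,L\}\) and let \(P_i\) denote the prefix map \(Y\mapsto(q,T_1,O_1,\ldots,T_i,O_i)\), so that \(\mathcal F_i=\sigma(P_i)\). The finiteness assumption means that \(P_i\) takes values in a countable set \(\mathcal P_i\) of finite token sequences. This set is the countable union over lengths of finite products of the finite vocabulary. We equip \(\mathcal P_i\) with its discrete \(\sigma\)-algebra. For trajectories with \(\tau<i\), the convention is that the prefix is the whole trajectory (equivalently, it is padded with a terminal symbol), and \(S_i=S_\tau\). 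This convention keeps \(P_i\) well defined on all of the sample space.

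We then define \(g_i:\mathcal P_i\to\mathbb R\) by choosing, for each prefix \(p\) in the range of \(P_i\), any trajectory \(Y\) with \(P_i(Y)=p\) and setting \(g_i(p)=S_i(Y)\). Prefix Consistency says exactly that this value does not depend on the choice of \(Y\), so \(g_i\) is well defined and \(S_i=g_i\circ P_i\) holds pointwise. On prefixes outside the range, we set \(g_i=0\). Since \(\mathcal P_i\) carries the discrete \(\sigma\)-algebra, every function on it is measurable. Therefore, for any Borel set \(B\), the preimage is
\[
S_i^{-1}(B)=P_i^{-1}\bigl(g_i^{-1}(B)\bigr)\in\sigma(P_i)=\mathcal F_i .
\]
This gives adaptedness. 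The case \(i=0\) is the same argument with \(P_0=q\), and there \(S_0=0\) is trivially measurable.

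The only delicate point is measurability of \(g_i\). For general (uncountable) observation spaces, a well-defined factorization through \(P_i\) need not be measurable, and this is precisely why the lemma assumes finite token sequences over a finite vocabulary. Under that assumption, countability of \(\mathcal P_i\) removes the issue: each fiber \(P_i^{-1}(\{p\})\) is an atom of \(\mathcal F_i\), and \(S_i\) is constant on each atom. Hence \(S_i^{-1}(B)\) is a countable union of such atoms, which lies in \(\mathcal F_i\).

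One should also note the handling of the stopping time. Because \(\tau\) is determined by the prefix (generation stops at an end token), the event \(\{\tau\ge i\}\) is in \(\mathcal F_i\). The padding convention is therefore consistent, and the atom argument applies unchanged.
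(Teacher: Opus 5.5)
Your proposal is correct and follows essentially the same route as the paper's proof. Both arguments factor \(S_i=g_i\circ H_i\) through the prefix map into a countable prefix space with the discrete \(\sigma\)-algebra, use Prefix Consistency to make \(g_i\) well defined on the range (setting it to \(0\) elsewhere), and conclude by measurability of the composition. Your explicit padding convention for \(\tau<i\) is a small extra detail: the paper's lemma proof leaves it implicit and handles it afterwards through the zero-extension convention in the proof of Theorem~\ref{thm:unique_credit}.
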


\begin{proof}
Let \((\Omega,\mathcal F,\mathbb P)\) denote the underlying probability space. We assume that all information presented to the language model, including the prompt \(q\) and each environmental observation \(O_j\), is represented as a finite token sequence over the finite vocabulary \(\mathcal V\). Let
\[
\mathcal V^*
=
\bigcup_{m=0}^{\infty}\mathcal V^m
\]denote the set of all finite token sequences. Since \(\mathcal V\) is finite, \(\mathcal V^*\) is countable.
For any fixed step \(i\), define the observed prefix
\[
H_i
=
(q,T_1,O_1,\ldots,T_i,O_i).
\]Then \(H_i\) takes values in the countable prefix space
\[
\mathcal H_i
=
\mathcal V^*
\times
(\mathcal V\times\mathcal V^*)^i,
\]which we equip with the discrete sigma-algebra \(2^{\mathcal H_i}\). By construction,
\[
\mathcal F_i=\sigma(H_i).
\]Prefix Consistency requires that, for any \(\omega,\widetilde\omega\in\Omega\),
\[
H_i(\omega)=H_i(\widetilde\omega)
\quad\Longrightarrow\quad
S_i(\omega)=S_i(\widetilde\omega).
\]Hence, for each \(h\in H_i(\Omega)\), the random variable \(S_i\) takes a common value on the fiber
\[
H_i^{-1}(\{h\}).
\]Define \(g_i:\mathcal H_i\to\mathbb R\) by assigning this common value to \(g_i(h)\) for \(h\in H_i(\Omega)\), and set \(g_i(h)=0\) for \(h\notin H_i(\Omega)\). Prefix Consistency ensures that \(g_i\) is well defined.
Because \(\mathcal H_i\) is equipped with the discrete sigma-algebra, \(g_i\) is measurable. Moreover, by construction,
\[
S_i=g_i\circ H_i.
\]Since \(H_i\) is \(\mathcal F_i\)-measurable, it follows that \(S_i\) is also \(\mathcal F_i\)-measurable. As this argument applies to every \(i\), the accumulated-credit process is adapted to the prefix filtration.

\end{proof}

\subsubsection{Proof of Unique Representation Theorem}

\uniquecredit*

\begin{proof}
Since $\tau\leq L$ almost surely, we adopt the convention that
$C_i=0$ on $\{i>\tau\}$ and regard every credit sequence as being defined
on the deterministic horizon $\{1,\ldots,L\}$. We similarly define
\[
V_i:=\mathbb E[R\mid\mathcal F_i],
\qquad i=0,\ldots,L.
\]
This extension is innocuous. Indeed, since $R$ is
$\mathcal F_\tau$-measurable, we have
\[
V_i=R
\qquad\text{on }\{\tau\leq i\}
\quad\text{a.s.}
\]
Consequently, $V_i-V_{i-1}=0$ on $\{i>\tau\}$ almost surely.

\paragraph{Existence.}
Define
\[
C_i^\star:=V_i-V_{i-1},
\qquad i=1,\ldots,L.
\]
We verify that this assignment satisfies the three regularity conditions.

First, the sum telescopes:
\[
\sum_{i=1}^{\tau}C_i^\star
=
V_\tau-V_0.
\]
Because $R$ is $\mathcal F_\tau$-measurable,
\[
V_\tau
=
\mathbb E[R\mid\mathcal F_\tau]
=
R
\quad\text{a.s.}
\]
Therefore,
\[
\sum_{i=1}^{\tau}C_i^\star
=
R-\mathbb E[R\mid\mathcal F_0]
\quad\text{a.s.},
\]
which establishes Completeness.

Next, let
\[
S_i^\star
:=
\sum_{j=1}^{i}C_j^\star
=
V_i-V_0.
\]
Recall that $\mathcal F_i=\sigma(H_i)$ and that the prefix space
$\mathcal H_i$ is countable. Hence, we may choose a version of the
conditional expectation for which
\[
V_i=v_i(H_i)
\]
for some measurable function $v_i:\mathcal H_i\to\mathbb R$. Likewise,
$V_0=v_0(q)$ for some measurable function $v_0$. It follows that
\[
S_i^\star
=
v_i(H_i)-v_0(q).
\]
Thus, if two trajectories have the same prefix up to step $i$, they have
the same value of $S_i^\star$. This proves Prefix Consistency.

Finally, the tower property of conditional expectation gives
\[
\begin{aligned}
\mathbb E[C_i^\star\mid\mathcal F_{i-1}]
&=
\mathbb E[V_i-V_{i-1}\mid\mathcal F_{i-1}]\\
&=
\mathbb E\!\left[
    \mathbb E[R\mid\mathcal F_i]
    \,\middle|\,\mathcal F_{i-1}
\right]
-
V_{i-1}\\
&=
\mathbb E[R\mid\mathcal F_{i-1}]
-
V_{i-1}\\
&=0
\quad\text{a.s.}
\end{aligned}
\]
Therefore, $C_i^\star$ also satisfies Neutrality, proving existence.

\paragraph{Uniqueness.}
Let $\{\widetilde C_i\}_{i=1}^{\tau}$ be any other credit assignment
satisfying Completeness, Prefix Consistency, and Neutrality. Extend it by
setting $\widetilde C_i=0$ on $\{i>\tau\}$, and define its accumulated
credit process by
\[
\widetilde S_i
:=
\sum_{j=1}^{i}\widetilde C_j,
\qquad i=0,\ldots,L,
\]
with $\widetilde S_0=0$.

By Prefix Consistency,
$\widetilde S_i$ depends only on the observed prefix $H_i$. Equivalently,
as established in Lemma~\ref{lem:prefix_consistency_adaptedness},
$\widetilde S_i$ is $\mathcal F_i$-measurable. Moreover, Neutrality yields
\[
\begin{aligned}
\mathbb E[\widetilde S_i\mid\mathcal F_{i-1}]
&=
\mathbb E[
    \widetilde S_{i-1}+\widetilde C_i
    \mid\mathcal F_{i-1}
]\\
&=
\widetilde S_{i-1}
+
\mathbb E[\widetilde C_i\mid\mathcal F_{i-1}]\\
&=
\widetilde S_{i-1}.
\end{aligned}
\]
Hence, $\{\widetilde S_i\}_{i=0}^{L}$ is a martingale with respect to
$\{\mathcal F_i\}_{i=0}^{L}$.

By Completeness and the zero extension after $\tau$, its terminal value is
\[
\widetilde S_L
=
\widetilde S_\tau
=
R-\mathbb E[R\mid\mathcal F_0]
=
R-V_0
\quad\text{a.s.}
\]
A finite-horizon martingale is determined by its terminal value. Therefore,
for every $i=0,\ldots,L$,
\[
\begin{aligned}
\widetilde S_i
&=
\mathbb E[\widetilde S_L\mid\mathcal F_i]\\
&=
\mathbb E[R-V_0\mid\mathcal F_i]\\
&=
\mathbb E[R\mid\mathcal F_i]-V_0\\
&=
V_i-V_0
\quad\text{a.s.}
\end{aligned}
\]
Taking successive differences gives
\[
\widetilde C_i
=
\widetilde S_i-\widetilde S_{i-1}
=
V_i-V_{i-1}
=
C_i^\star
\quad\text{a.s.}
\]
for every $i\leq\tau$. Thus, the credit assignment is unique up to
almost-sure equality.

The representation also immediately implies
\[
\mathbb E[C_i^\star\mid\mathcal F_{i-1}]=0,
\]
so $\{C_i^\star\}_{i=1}^{\tau}$ is a martingale difference sequence with
respect to $\{\mathcal F_i\}_{i=0}^{\tau}$.
\end{proof}

\subsubsection{Necessity of the Regularity Conditions}
\label{app:necessity_conditions}

The three regularity conditions in
Theorem~\ref{thm:unique_credit} are logically independent. The following
construction shows that removing any one of them admits credit assignments
that differ from the unique conditional-reward increments.

\begin{proposition}[Necessity of the three regularity conditions]
\label{prop:necessity_conditions}
None of Completeness, Prefix Consistency, and Neutrality is implied by the
other two. A single bounded two-step process suffices to show that none of the
three conditions is implied by the other two.
\end{proposition}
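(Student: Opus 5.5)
We will build one concrete bounded two-step process and exhibit three explicit credit assignments. Each assignment satisfies exactly two of the three conditions and fails the third, so by the uniqueness in Theorem~\ref{thm:unique_credit} it must differ from $V_i-V_{i-1}$. Take a fixed prompt $q$, $\tau=L=2$, empty observations, and two independent fair binary tokens $T_1,T_2\in\{0,1\}$. Set $R=T_1T_2\in[0,1]$. Then $V_0=1/4$, $V_1=T_1/2$, and $V_2=R$, which gives the unique credits
\[
C_1^\star=\tfrac{T_1}{2}-\tfrac14,
\qquad
C_2^\star=T_1T_2-\tfrac{T_1}{2}.
\]

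\textbf{Dropping Completeness.} Take $C_i=2C_i^\star$. Prefix Consistency holds because the partial sums are still functions of the prefix. Neutrality holds by linearity. However, $C_1+C_2=2(R-V_0)\neq R-V_0$ on a set of positive probability.

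\textbf{Dropping Neutrality.} Shift credit between the two tokens with a deterministic nonzero constant $c$:
\[
C_1=C_1^\star+c,\qquad C_2=C_2^\star-c.
\]
Completeness is unchanged. Prefix Consistency holds since $S_1=S_1^\star+c$ depends only on $T_1$ and $S_2=R-V_0$. Neutrality fails because $\mathbb E[C_1\mid\mathcal F_0]=c\neq0$.

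\textbf{Dropping Prefix Consistency.} This is the main step, since we need a zero-conditional-mean perturbation that uses future information. Take
\[
C_1=C_1^\star+\eta,\qquad C_2=C_2^\star-\eta,
\qquad \eta=T_2-\tfrac12.
\]
Completeness holds because the perturbations cancel in the sum.

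Neutrality holds at step 1: $\eta$ is independent of $\mathcal F_0$ and has mean zero, so $\mathbb E[C_1\mid\mathcal F_0]=0$.

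At step 2 the naive check fails: $\mathbb E[C_2\mid\mathcal F_1]=0-\mathbb E[\eta\mid\mathcal F_1]=0$, which is fine, so Neutrality in fact holds at both steps.

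Prefix Consistency fails. The trajectories $(0,0)$ and $(0,1)$ share the prefix $T_1=0$, yet $S_1$ takes the values $-1/4-1/2$ and $-1/4+1/2$ respectively.

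In each case we finally confirm that the assignment differs from $C^\star$ on a set of positive probability. This is immediate from the computations above.
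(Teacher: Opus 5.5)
Your proof is correct and takes essentially the paper's route: a single bounded two-step process built from two independent fair coins, with Prefix Consistency broken by a zero-mean transfer of the future coin between the two tokens (your $\eta=T_2-\tfrac12$ is the paper's $Y$ up to scaling), Neutrality broken by a predictable transfer, and Completeness broken by rescaling the credit. The paper instead uses $R=(1+X)/2$, the zero assignment, and $(C_1,C_2)=(X,-X/2)$, whereas your $R=T_1T_2$ just makes $C_2^\star$ nonzero.

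Two cosmetic fixes. First, drop the self-contradictory remark that the ``naive check fails'' at step 2, since you correctly compute $\mathbb E[C_2\mid\mathcal F_1]=0$. Second, each assignment differs from $C^\star$ because $C^\star$ satisfies all three conditions (the existence half of Theorem~\ref{thm:unique_credit}), not because of uniqueness.
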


\begin{proof}
Let \(X\) and \(Y\) be independent Rademacher random variables:
\[
\mathbb P(X=1)
=
\mathbb P(X=-1)
=
\mathbb P(Y=1)
=
\mathbb P(Y=-1)
=
\frac12.
\]
Consider a stochastic process with a deterministic two-step horizon with
\[
\tau=2,
\qquad
\mathcal F_0=\{\varnothing,\Omega\},
\qquad
\mathcal F_1=\sigma(X),
\qquad
\mathcal F_2=\sigma(X,Y),
\]
and define the terminal reward by
\[
R=\frac{1+X}{2}.
\]
Thus \(R\in\{0,1\}\), and
\[
V_0=\mathbb E[R]=\frac12,
\qquad
V_1=\mathbb E[R\mid\mathcal F_1]=R,
\qquad
V_2=\mathbb E[R\mid\mathcal F_2]=R.
\]
The unique representation from
Theorem~\ref{thm:unique_credit} is therefore
\[
C_1^\star=\frac{X}{2},
\qquad
C_2^\star=0.
\]

\paragraph{Completeness}
Consider the assignment
\[
C_1=0,
\qquad
C_2=0.
\]
Its accumulated credits are
\[
S_1=0,
\qquad
S_2=0,
\]
so Prefix Consistency holds. Neutrality also holds because
\[
\mathbb E[C_1\mid\mathcal F_0]=0,
\qquad
\mathbb E[C_2\mid\mathcal F_1]=0.
\]
However,
\[
C_1+C_2
=
0
\neq
R-V_0
=
\frac{X}{2}.
\]
Thus an assignment may be prefix-consistent and neutral while explaining
none of the realized reward deviation.

\paragraph{Prefix Consistency}
Consider the assignment
\[
C_1=\frac{X}{2}+Y,
\qquad
C_2=-Y.
\]
It satisfies Completeness because
\[
C_1+C_2
=
\frac{X}{2}
=
R-V_0.
\]
It also satisfies Neutrality:
\[
\mathbb E[C_1\mid\mathcal F_0]
=
\mathbb E\left[\frac{X}{2}+Y\right]
=
0
\]
and, by the independence of \(X\) and \(Y\),
\[
\mathbb E[C_2\mid\mathcal F_1]
=
-\mathbb E[Y\mid X]
=
0.
\]
Nevertheless,
\[
S_1=C_1=\frac{X}{2}+Y
\]
is not \(\mathcal F_1\)-measurable. Indeed,
\[
\operatorname{Var}(S_1\mid\mathcal F_1)
=
\operatorname{Var}(Y\mid X)
=
1.
\]
Consequently, two trajectories with the same first-step information
\(X\) but different future realizations of \(Y\) assign different credit
to the first token. The assignment therefore uses future information to
revise the credit of an already generated prefix.

\paragraph{Neutrality}
Consider the assignment
\[
C_1=X,
\qquad
C_2=-\frac{X}{2}.
\]
It satisfies Completeness because
\[
C_1+C_2
=
\frac{X}{2}
=
R-V_0.
\]
Both \(S_1=X\) and \(S_2=X/2\) are measurable with respect to their corresponding prefix sigma-algebras. Thus, Prefix Consistency is satisfied.
However,
\[
\mathbb E[C_2\mid\mathcal F_1]
=
-\frac{X}{2}
\neq 0
\qquad\text{a.s.}
\]
The second-token credit is already completely predictable before the
second token is generated. Thus credit can be moved arbitrarily between
token positions through predictable compensating terms while preserving
both prefix measurability and the terminal sum.

The three constructions show that each condition excludes a distinct
pathology. Completeness prevents the assignment from ignoring part or
all of the realized outcome. Prefix Consistency prevents future
information from modifying credit assigned to an earlier prefix.
Neutrality prevents predictable zero-sum transfers of credit across
token positions. Hence all three conditions are necessary for the unique
representation in Theorem~\ref{thm:unique_credit}.
\end{proof}

\subsubsection{Aggregation of the Unique Token-Level Representation}
\Cor*
\begin{proof}
The result follows directly from Theorem~\ref{thm:unique_credit} by summing
the unique token-level credit increments within each segment.
\end{proof}

\subsection{Teacher Credit Equivalence Theorem}
\label{app:opd_credit}

\opd*

\begin{proof}
Condition on \(\mathcal F_{t-1}\). The optimization problem in
Definition~\ref{def:ideal_opd_teacher} can be written as
\[
\max_{q\in\Delta(\mathcal V)}
\left\{
\sum_{a\in\mathcal V}q(a)Q_t^\pi(a)
-
\beta
\sum_{a\in\mathcal V}
q(a)\log\frac{q(a)}{p_t(a)}
\right\}.
\]
Its Lagrangian is
\[
\mathcal L(q,\lambda)
=
\sum_{a\in\mathcal V}q(a)Q_t^\pi(a)
-
\beta
\sum_{a\in\mathcal V}
q(a)\log\frac{q(a)}{p_t(a)}
+
\lambda
\left(
\sum_{a\in\mathcal V}q(a)-1
\right).
\]
The first-order condition for each \(a\in\mathcal V\) is
\[
Q_t^\pi(a)
-
\beta
\left(
\log\frac{q(a)}{p_t(a)}+1
\right)
+
\lambda
=
0.
\]
Consequently, the unique optimizer is
\[
q_t^\star(a)
=
\frac{
p_t(a)\exp\!\left(Q_t^\pi(a)/\beta\right)
}{
\sum_{b\in\mathcal V}
p_t(b)\exp\!\left(Q_t^\pi(b)/\beta\right)
}.
\]
Define the prefix-dependent normalizing factor
\[
\mathcal Z_t
=
\sum_{b\in\mathcal V}
p_t(b)\exp\!\left(Q_t^\pi(b)/\beta\right).
\]
It follows that
\[
\log\frac{q_t^\star(a)}{p_t(a)}
=
\frac{1}{\beta}Q_t^\pi(a)
-
\log\mathcal Z_t.
\]
Substituting this identity into the OPD gradient yields
\[
\begin{aligned}
G_t^{\mathrm{OPD}}(q_t^\star)
&=
\frac{1}{\beta}
\mathbb E_\pi
\left[
Z_tQ_t^\pi(T_t)
\middle|
\mathcal F_{t-1}
\right]
-
\log\mathcal Z_t\,
\mathbb E_\pi
\left[
Z_t
\middle|
\mathcal F_{t-1}
\right].
\end{aligned}
\]
The conditional score-function identity gives
\[
\mathbb E_\pi
\left[
Z_t
\middle|
\mathcal F_{t-1}
\right]
=
0.
\]
Therefore,
\[
G_t^{\mathrm{OPD}}(q_t^\star)
=
\frac{1}{\beta}
\mathbb E_\pi
\left[
Z_tQ_t^\pi(T_t)
\middle|
\mathcal F_{t-1}
\right].
\]

It remains to relate the token-action value to the unique credit. Since
\[
C_t^\pi
=
V_t^\pi-V_{t-1}^\pi,
\]
the tower property gives
\[
\begin{aligned}
\mathbb E_\pi
\left[
C_t^\pi
\middle|
\mathcal F_{t-1},T_t
\right]
&=
\mathbb E_\pi
\left[
V_t^\pi
\middle|
\mathcal F_{t-1},T_t
\right]
-
V_{t-1}^\pi
\\
&=
Q_t^\pi(T_t)-V_{t-1}^\pi.
\end{aligned}
\]
Because \(Z_t\) is measurable with respect to
\(\sigma(\mathcal F_{t-1},T_t)\),
\[
\begin{aligned}
\mathbb E_\pi
\left[
Z_tC_t^\pi
\middle|
\mathcal F_{t-1}
\right]
&=
\mathbb E_\pi
\left[
Z_t
\mathbb E_\pi
\left[
C_t^\pi
\middle|
\mathcal F_{t-1},T_t
\right]
\middle|
\mathcal F_{t-1}
\right]
\\
&=
\mathbb E_\pi
\left[
Z_t
\left(
Q_t^\pi(T_t)-V_{t-1}^\pi
\right)
\middle|
\mathcal F_{t-1}
\right]
\\
&=
\mathbb E_\pi
\left[
Z_tQ_t^\pi(T_t)
\middle|
\mathcal F_{t-1}
\right].
\end{aligned}
\]
The last equality again follows from the conditional score-function
identity. Combining the two expressions proves
\[
G_t^{\mathrm{OPD}}(q_t^\star)
=
\frac{1}{\beta}
\mathbb E_\pi
\left[
Z_tC_t^\pi
\middle|
\mathcal F_{t-1}
\right].
\]
\end{proof}

\subsection{Gradient Unbiasedness of the RLOO Estimator}
Throughout the proof, all expectations are conditioned on the prompt \(q\).
Following the zero-extension convention, set both \(C_t\) and \(Z_t\)
to zero after termination.
Assume that all expectations involving policy scores below are finite.
\begin{proof}
Let
\[
Z_t=\nabla_\theta\log\pi_\theta(T_t\mid\mathcal F_{t-1}).
\]
By Theorem~\ref{thm:unique_credit}, the unique token-level credit satisfies
\(C_t=V_t-V_{t-1}\) and
\(
R_i-V_0=\sum_{k=1}^{\tau}C_k .
\)
We first show that
\[
\mathbb E[Z_t(R_i-C_t)]=0.
\]

For \(k<t\), \(C_k\) is \(\mathcal F_{t-1}\)-measurable, and therefore
\[
\mathbb E[Z_tC_k]
=
\mathbb E[C_k\mathbb E[Z_t|\mathcal F_{t-1}]]
=0.
\]
For \(k>t\), \(Z_t\) is \(\mathcal F_{k-1}\)-measurable, and the martingale
difference property of \(C_k\) gives
\[
\mathbb E[Z_tC_k]
=
\mathbb E[Z_t\mathbb E[C_k|\mathcal F_{k-1}]]
=0.
\]
The same argument applies to \(V_0\), since
\(V_0\) is \(\mathcal F_{t-1}\)-measurable. Hence
\[
\mathbb E[Z_t(R_i-C_t)]=0.
\]
For the RLOO baseline, \(\bar R_{-i}\) is computed from independent responses
and is independent of the current trajectory. 

Thus
\(
\mathbb E[Z_t\bar R_{-i}]=0.
\)
Combining the above results gives
\[
\mathbb E[Z_t(R_i-\bar R_{-i})]
=
\mathbb E[Z_tC_t].
\]
\end{proof}

\subsection{Approximate Credit Sparsity Theorem}

\ACS*

\begin{proof}
Since \(\tau\le L\) almost surely, extend the credit sequence to the
deterministic horizon \(\{1,\ldots,L\}\) by setting \(C_i=0\) on
\(\{i>\tau\}\). By the unique representation,
\(C_i=V_i-V_{i-1}\), where \(V_i=\mathbb E[R\mid\mathcal F_i]\).
Because \(R\in[0,1]\), these increments are square-integrable.

For \(1\le i<j\le L\), \(C_i\) is
\(\mathcal F_{j-1}\)-measurable. The tower property and the
martingale difference property therefore give
\[
\mathbb E[C_iC_j\mid\mathcal F_0]
=
\mathbb E\!\left[
C_i\,\mathbb E[C_j\mid\mathcal F_{j-1}]
\middle|\mathcal F_0
\right]
=0.
\]
By Completeness and the zero extension,
\[
\sum_{i=1}^{L}C_i
=
\sum_{i=1}^{\tau}C_i
=
R-V_0,
\qquad
V_0=\mathbb E[R\mid\mathcal F_0].
\]
Consequently,
\[
\begin{aligned}
\mathbb E\!\left[
\sum_{i=1}^{\tau}C_i^2
\middle|\mathcal F_0
\right]
&=
\sum_{i=1}^{L}\mathbb E[C_i^2\mid\mathcal F_0]\\
&=
\mathbb E\!\left[
\left(\sum_{i=1}^{L}C_i\right)^2
\middle|\mathcal F_0
\right]\\
&=
\mathbb E[(R-V_0)^2\mid\mathcal F_0]\\
&=
\operatorname{Var}(R\mid\mathcal F_0).
\end{aligned}
\]
Since \(R^2\le R\) and \(V_0\in[0,1]\),
\[
\operatorname{Var}(R\mid\mathcal F_0)
=
\mathbb E[R^2\mid\mathcal F_0]-V_0^2
\le V_0(1-V_0)
\le \frac14.
\]

Finally, for any \(\epsilon>0\), the pointwise inequality
\[
\epsilon^2
\sum_{i=1}^{\tau}\mathbf 1\{|C_i|>\epsilon\}
\le
\sum_{i=1}^{\tau}C_i^2
\]
implies, upon taking conditional expectations,
\[
\mathbb E\!\left[
\sum_{i=1}^{\tau}\mathbf 1\{|C_i|>\epsilon\}
\middle|\mathcal F_0
\right]
\le
\frac{\operatorname{Var}(R\mid\mathcal F_0)}{\epsilon^2}
\le
\frac{1}{4\epsilon^2}.
\]
\end{proof}
\subsection{Gradient of BCE with Importance-Weighted Targets}
\label{app:importance_weighted_bce}

For the prefix \(\mathcal F_{t-1}\), let the critic prediction be
\[
\widehat V_{\phi,t-1}=\sigma(z_{\phi,t-1}),
\]
where \(z_{\phi,t-1}\) is the critic logit. Let \(I_t\) be the
exact continuation importance ratio satisfying
\[
\mathbb E_\mu[I_tR\mid\mathcal F_{t-1}]
=
V_{t-1}^\pi.
\]
The importance-weighted target \(I_tR\) is held fixed when
differentiating with respect to the critic logit. Its BCE is
\[
\begin{aligned}
\operatorname{BCE}
\left(\sigma(z_{\phi,t-1}),I_tR\right)
&=
-I_tR\log\sigma(z_{\phi,t-1}) \\
&\quad-
(1-I_tR)\log\left(1-\sigma(z_{\phi,t-1})\right) \\
&=
\log\left(1+\exp(z_{\phi,t-1})\right)
-I_tRz_{\phi,t-1}.
\end{aligned}
\]
Therefore,
\[
\frac{\partial}{\partial z_{\phi,t-1}}
\operatorname{BCE}
\left(\sigma(z_{\phi,t-1}),I_tR\right)
=
\widehat V_{\phi,t-1}-I_tR.
\]
Taking the conditional expectation under the rollout policy gives
\[
\begin{aligned}
\mathbb E_\mu
\left[
\left.
\frac{\partial}{\partial z_{\phi,t-1}}
\operatorname{BCE}
\left(\sigma(z_{\phi,t-1}),I_tR\right)
\right|
\mathcal F_{t-1}
\right]
&=
\widehat V_{\phi,t-1}
-
\mathbb E_\mu[I_tR\mid\mathcal F_{t-1}] \\
&=
\widehat V_{\phi,t-1}-V_{t-1}^\pi.
\end{aligned}
\]
For \(0<V_{t-1}^\pi<1\), the expected logit gradient vanishes
exactly when
\[
\widehat V_{\phi,t-1}=V_{t-1}^\pi.
\]
When \(V_{t-1}^\pi=0\) or \(1\), equality is approached as
the logit tends to \(-\infty\) or \(+\infty\), respectively.
In addition,
\[
\frac{\partial^2}{\partial z_{\phi,t-1}^2}
\operatorname{BCE}
\left(\sigma(z_{\phi,t-1}),I_tR\right)
=
\widehat V_{\phi,t-1}
\left(1-\widehat V_{\phi,t-1}\right)
\leq \frac14.
\]
Thus, although individual importance-weighted targets may
fall outside \([0,1]\), the expected logit gradient has the
sign of the value prediction error, and the curvature with
respect to the logit remains uniformly bounded.

We further show that approaching the minimum expected BCE
implies approaching the target conditional mean in mean square.
Let \(Y\) be an integrable target and write
\[
m=\mathbb E[Y\mid\mathcal F]\in[0,1].
\]
For an \(\mathcal F\)-measurable prediction \(v\in(0,1)\), define
\[
\ell(v)
=
\mathbb E[\operatorname{BCE}(v,Y)\mid\mathcal F]
=
-m\log v-(1-m)\log(1-v).
\]
All logarithms are natural, and the expectations below are
assumed to exist and be finite. Differentiating gives
\[
\ell'(v)=\frac{v-m}{v(1-v)}.
\]
Since \(u(1-u)\leq 1/4\), for \(v\geq m\),
\[
\ell(v)-\ell(m)
=
\int_m^v\frac{u-m}{u(1-u)}\,du
\geq
4\int_m^v(u-m)\,du
=
2(v-m)^2.
\]
Similarly, for \(v<m\),
\[
\ell(v)-\ell(m)
=
\int_v^m\frac{m-u}{u(1-u)}\,du
\geq
4\int_v^m(m-u)\,du
=
2(v-m)^2.
\]
Here,
\[
\ell(m)=-m\log m-(1-m)\log(1-m),
\]
with \(0\log 0=0\). When \(m=0\) or \(m=1\), this expression
denotes the infimum over \(v\in(0,1)\), and the inequalities
follow by taking limits. Taking expectations gives
\begin{equation}
\mathbb E[(v-m)^2]
\leq
\frac12\,\mathbb E[\ell(v)-\ell(m)].
\label{eq:bce_value_error}
\end{equation}
Consequently, for any sequence of predictions \(v_n\),
\[
\mathbb E[\ell(v_n)-\ell(m)]\longrightarrow 0
\quad\Longrightarrow\quad
\mathbb E[(v_n-m)^2]\longrightarrow 0.
\]
This applies both to the on-policy target \(Y=R\) and to
the exact importance-weighted target \(Y=IR\), provided that
\(\mathbb E_\mu[IR\mid\mathcal F]=V^\pi\).

We can further relate value estimation error to credit
estimation error. Fix a policy \(\pi\) and a trajectory
distribution under which all the following expectations
are taken. For \(i\in\{t-1,t\}\), suppose that
\[
\mathbb E[Y_i\mid\mathcal F_i]=V_i^\pi,
\]
and let \(\ell_i\) denote the corresponding conditional
expected BCE. Define
\[
\varepsilon_i=\widehat V_{\phi,i}-V_i^\pi,
\qquad
\widehat C_{\phi,t}
=
\widehat V_{\phi,t}-\widehat V_{\phi,t-1}.
\]
Since \(C_t^\pi=V_t^\pi-V_{t-1}^\pi\),
\[
\widehat C_{\phi,t}-C_t^\pi
=
\varepsilon_t-\varepsilon_{t-1}.
\]
Using \((a-b)^2\leq 2a^2+2b^2\) and
Equation~\ref{eq:bce_value_error}, we obtain
\begin{equation}
\begin{aligned}
\mathbb E[(\widehat C_{\phi,t}-C_t^\pi)^2]
&\leq
2\mathbb E[\varepsilon_t^2]
+
2\mathbb E[\varepsilon_{t-1}^2] \\
&\leq
\mathbb E[
\ell_t(\widehat V_{\phi,t})-\ell_t(V_t^\pi)
] \\
&\quad+
\mathbb E[
\ell_{t-1}(\widehat V_{\phi,t-1})
-\ell_{t-1}(V_{t-1}^\pi)
].
\end{aligned}
\label{eq:bce_credit_error}
\end{equation}
Thus, if the expected BCE at both adjacent prefixes approaches
its minimum, the estimated credit converges to \(C_t^\pi\)
in mean square.

\subsection{Current-Token Importance Weighting as a One-Step Policy-Tracking Surrogate}
\label{app:one_step_policy_tracking}

Exact synchronization with an updated policy requires correcting the entire
continuation distribution. Such a correction involves a product of
token-level importance ratios and can be unstable for long trajectories. We
characterize here the current-token importance target used in our method and
its relation to the exact updated-policy value.

Let \(\mu\) denote the rollout policy and \(\pi\) the policy obtained after
actor optimization. For the token generated after prefix
\(\mathcal F_{t-1}\), define
\[
\rho_t
=
\frac{
\pi(T_t\mid\mathcal F_{t-1})
}{
\mu(T_t\mid\mathcal F_{t-1})
}.
\]
We assume that \(\pi(\cdot\mid\mathcal F_{t-1})\) is absolutely continuous
with respect to \(\mu(\cdot\mid\mathcal F_{t-1})\). The exact continuation
importance ratio is
\[
W_t
=
\prod_{k=t}^{\tau}\rho_k,
\]
and change of measure gives
\[
V_{t-1}^{\pi}
=
\mathbb E_\pi[R\mid\mathcal F_{t-1}]
=
\mathbb E_\mu[W_tR\mid\mathcal F_{t-1}].
\]
Our practical critic target instead uses only the current-token ratio,
\[
Y_t^{\mathrm{one}}
=
\rho_tR,
\qquad
\widetilde V_{t-1}^{\pi,\mu}
=
\mathbb E_\mu[\rho_tR\mid\mathcal F_{t-1}].
\]

\begin{proposition}[One-step policy tracking]
\label{prop:one_step_policy_tracking}
The surrogate value \(\widetilde V_{t-1}^{\pi,\mu}\) is exactly the value of
the hybrid policy that samples \(T_t\) from \(\pi\) and follows \(\mu\)
thereafter:
\[
\widetilde V_{t-1}^{\pi,\mu}
=
\mathbb E_{\pi\triangleright_t\mu}
[R\mid\mathcal F_{t-1}].
\]
Moreover, suppose that \(R\in[0,1]\), \(\tau\leq L\), and
\[
\left|\rho_k-1\right|\leq\varepsilon,
\qquad k=t,\ldots,\tau.
\]
Writing \(\delta_k=\rho_k-1\), we have
\[
V_{t-1}^{\pi}
=
\widetilde V_{t-1}^{\pi,\mu}
+
\mathbb E_\mu
\left[
R\sum_{k=t+1}^{\tau}\delta_k
\middle|
\mathcal F_{t-1}
\right]
+
\mathcal R_t,
\]
where
\[
|\mathcal R_t|
\leq
(1+\varepsilon)^{L-t+1}
-
1
-
(L-t+1)\varepsilon
=
O_L(\varepsilon^2).
\]
Consequently,
\[
\widetilde V_{t-1}^{\pi,\mu}
=
V_{t-1}^{\pi}
+
O_L(\varepsilon)
\]
as \(\varepsilon\to0\) for fixed \(L\).
\end{proposition}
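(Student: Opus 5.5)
The plan has two parts: an exact one-token change-of-measure identity for the hybrid policy, and a multinomial expansion of the full continuation ratio \(W_t\) around \(1\). Throughout I use the zero-extension convention from the proof of Theorem~\ref{thm:unique_credit}: for \(k>\tau\) set \(\rho_k=1\), i.e.\ \(\delta_k=0\). Then \(W_t=\prod_{k=t}^{L}\rho_k\) is a product over the deterministic index set \(\{t,\ldots,L\}\) of size \(n:=L-t+1\), and the hypothesis \(|\delta_k|\le\varepsilon\) holds for every \(k\) in this set.

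For the hybrid identity, condition on \(\mathcal F_{t-1}\) and further on the value of \(T_t\), using the tower property. Since everything after \(T_t\) (observations and later tokens) is generated under \(\mu\) and unchanged dynamics, \(\mathbb E_\mu[R\mid\mathcal F_{t-1},T_t=a]\) is exactly the continuation value of the hybrid policy after action \(a\). Hence
\[
\mathbb E_\mu[\rho_tR\mid\mathcal F_{t-1}]
=\sum_{a}\mu(a\mid\mathcal F_{t-1})\frac{\pi(a\mid\mathcal F_{t-1})}{\mu(a\mid\mathcal F_{t-1})}\,
\mathbb E_\mu[R\mid\mathcal F_{t-1},T_t=a]
=\sum_a\pi(a\mid\mathcal F_{t-1})\,\mathbb E_\mu[R\mid\mathcal F_{t-1},T_t=a].
\]
The right-hand side is \(\mathbb E_{\pi\triangleright_t\mu}[R\mid\mathcal F_{t-1}]\). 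Absolute continuity ensures that the actions with \(\mu(a\mid\mathcal F_{t-1})=0\) carry no \(\pi\)-mass, so dropping them is harmless.

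For the expansion, write \(W_t=\prod_{k=t}^{L}(1+\delta_k)=\sum_{S\subseteq\{t,\ldots,L\}}\prod_{k\in S}\delta_k\). Grouping by \(|S|\), the terms with \(|S|\le1\) give \(1+\delta_t+\sum_{k=t+1}^{L}\delta_k\). Since \(\rho_t=1+\delta_t\), this yields
\[
W_tR=\rho_tR+R\sum_{k=t+1}^{\tau}\delta_k+R\,\Xi_t,
\qquad
\Xi_t:=\sum_{|S|\ge2}\prod_{k\in S}\delta_k.
\]
Taking \(\mathbb E_\mu[\cdot\mid\mathcal F_{t-1}]\) and applying the change of measure \(V^\pi_{t-1}=\mathbb E_\mu[W_tR\mid\mathcal F_{t-1}]\) gives the stated decomposition with \(\mathcal R_t:=\mathbb E_\mu[R\,\Xi_t\mid\mathcal F_{t-1}]\). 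All quantities are bounded, so integrability is automatic.

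For the remainder bound, use \(0\le R\le1\) and \(|\delta_k|\le\varepsilon\) pointwise:
\[
|R\,\Xi_t|\le\sum_{j=2}^{n}\binom{n}{j}\varepsilon^j=(1+\varepsilon)^n-1-n\varepsilon.
\]
Passing this through the conditional expectation gives the claimed bound on \(|\mathcal R_t|\), and it is \(O_L(\varepsilon^2)\) by Taylor expansion in \(\varepsilon\). For the final statement, the linear term satisfies \(\bigl|\mathbb E_\mu[R\sum_{k>t}\delta_k\mid\mathcal F_{t-1}]\bigr|\le(L-t)\varepsilon\), so \(\widetilde V^{\pi,\mu}_{t-1}-V^\pi_{t-1}=O_L(\varepsilon)\). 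The linear term does not vanish: although \(\mathbb E_\mu[\delta_k\mid\mathcal F_{k-1}]=0\), it is multiplied by \(R\).

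The main obstacle I expect is bookkeeping of the random horizon \(\tau\). The index range of \(W_t\) and of the linear sum must match the statement, i.e.\ the linear sum stops at \(\tau\) while the crude count uses \(L\). The zero-extension convention resolves this: \(\delta_k=0\) beyond \(\tau\) makes the extra terms vanish identically, so the deterministic count \(n=L-t+1\) is valid.
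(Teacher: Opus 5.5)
Your proposal is correct and follows essentially the same route as the paper: condition on \(T_t\) so the ratio cancels \(\mu\) and gives the hybrid-policy identity, then expand \(W_t=\prod_k(1+\delta_k)\) and bound all products of two or more \(\delta_k\)'s by the binomial tail \((1+\varepsilon)^{L-t+1}-1-(L-t+1)\varepsilon\). Your zero-extension \(\rho_k=1\) for \(k>\tau\) and your explicit \((L-t)\varepsilon\) bound on the linear term are slightly tidier bookkeeping; the paper handles the first via \(\tau-t+1\le L-t+1\) and simply calls the second immediate.
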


\begin{proof}
Conditioning first on \(T_t\), we obtain
\[
\begin{aligned}
\mathbb E_\mu[\rho_tR\mid\mathcal F_{t-1}]
&=
\sum_a
\mu(a\mid\mathcal F_{t-1})
\frac{\pi(a\mid\mathcal F_{t-1})}
{\mu(a\mid\mathcal F_{t-1})}
\mathbb E_\mu
[R\mid\mathcal F_{t-1},T_t=a] \\
&=
\sum_a
\pi(a\mid\mathcal F_{t-1})
\mathbb E_\mu
[R\mid\mathcal F_{t-1},T_t=a].
\end{aligned}
\]
The last expression is precisely the expected reward obtained by sampling the
current token from \(\pi\) and using \(\mu\) for the remaining continuation.
This proves the hybrid-policy identity.

For the local expansion, observe that
\[
W_t
=
\prod_{k=t}^{\tau}(1+\delta_k)
=
1+\sum_{k=t}^{\tau}\delta_k+r_t,
\]
where \(r_t\) contains all products involving at least two distinct
\(\delta_k\)'s. Since \(|\delta_k|\leq\varepsilon\) and
\(\tau-t+1\leq L-t+1\),
\[
|r_t|
\leq
\sum_{m=2}^{L-t+1}
\binom{L-t+1}{m}\varepsilon^m
=
(1+\varepsilon)^{L-t+1}
-
1
-
(L-t+1)\varepsilon.
\]
Multiplying by \(R\), taking the conditional expectation, and using
\(0\leq R\leq1\) gives
\[
\begin{aligned}
V_{t-1}^{\pi}
&=
\mathbb E_\mu[W_tR\mid\mathcal F_{t-1}] \\
&=
\mathbb E_\mu[(1+\delta_t)R\mid\mathcal F_{t-1}]
+
\mathbb E_\mu
\left[
R\sum_{k=t+1}^{\tau}\delta_k
\middle|
\mathcal F_{t-1}
\right]
+
\mathcal R_t \\
&=
\widetilde V_{t-1}^{\pi,\mu}
+
\mathbb E_\mu
\left[
R\sum_{k=t+1}^{\tau}\delta_k
\middle|
\mathcal F_{t-1}
\right]
+
\mathcal R_t.
\end{aligned}
\]
The stated bounds follow immediately.
\end{proof}

For comparison, the complete first-order expansion of the continuation
importance target is
\[
Y_t^{\mathrm{FO}}
=
R\left(
1+\sum_{k=t}^{\tau}(\rho_k-1)
\right),
\]
which satisfies
\[
\mathbb E_\mu
[Y_t^{\mathrm{FO}}\mid\mathcal F_{t-1}]
=
V_{t-1}^{\pi}
+
O_L(\varepsilon^2).
\]
The practical target \(\rho_tR\) retains the current-token component of this
first-order correction and omits the remaining continuation terms. It is
therefore a one-step truncation of the full policy correction. This choice
avoids the multiplicative continuation weight while moving the critic target
toward the updated policy.

\section{Policy Aligned Critic Training Procedure}

Algorithm~\ref{alg:pact} summarizes PACT.
Here,  $\operatorname{ActorUpdate}$ denotes
its policy optimization step. All importance ratios and critic targets are
computed token-wise.

Notation: actor policy $\pi_{\theta_0}$;
probabilistic critic
$\widehat V_{\phi_0}(\mathcal F)=\sigma(z_{\phi_0}(\mathcal F))$;
number of iterations $K$;
actor/critic steps $B$;
importance-ratio acceptance bounds
$\rho_{\min},\rho_{\max}$

\begin{algorithm}[H]
\caption{Policy Aligned Critic Training}
\label{alg:pact}
\begin{algorithmic}
\Require $\pi_{\theta_0}$, $\widehat V_{\phi_0}$,
         $K$, $B$, $\rho_{\min}$, $\rho_{\max}$

\For{$k=0,\ldots,K-1$}

    \State $\mathcal D_k,R,\ell^{\mathrm{old}}
        \gets \Call{Rollout}{\pi_{\theta_k}}$

    \State $\widehat V
        \gets
        \operatorname{sg}
        \bigl(\widehat V_{\phi_k}(\mathcal D_k)\bigr)$

    \State $\widehat C
        \gets \Call{Credit}{R,\widehat V}$

    \State $\theta\gets\theta_k$

    \For{$b=1,\ldots,B$}
        \State $\theta
            \gets
            \Call{ActorUpdate}
            {\theta,\mathcal D_k,\widehat C,\ell^{\mathrm{old}}}$
    \EndFor

    \State $\theta_{k+1}\gets\theta$

    \State $\ell^{\mathrm{new}}
        \gets
        \Call{LogProb}{\pi_{\theta_{k+1}},\mathcal D_k}$

    \State $\rho
        \gets
        \operatorname{sg}\!\left[
        \exp\bigl(\ell^{\mathrm{new}}-\ell^{\mathrm{old}}\bigr)
        \right]$

    \State $\mathsf m
        \gets
        \mathbb I
        \bigl\{
        \rho_{\min}\leq\rho\leq\rho_{\max}
        \bigr\}$

    \State $Y\gets\rho\odot R$

    \State $\phi\gets\phi_k$

    \For{$b=1,\ldots,B$}

        \State $\displaystyle
        \mathcal L_{\mathrm{PACT}}(\phi)
        \gets
        \frac{
        \sum
        \mathsf m\odot
        \operatorname{BCE}
        \bigl(
        \widehat V_{\phi}(\mathcal D_k),
        Y
        \bigr)
        }{
        \sum\mathsf m
        }$

        \State $\phi
            \gets
            \phi-\eta_{\phi}
            \nabla_{\phi}
            \mathcal L_{\mathrm{PACT}}(\phi)$

    \EndFor

    \State $\phi_{k+1}\gets\phi$

\EndFor

\State \Return $\pi_{\theta_K},\widehat V_{\phi_K}$
\end{algorithmic}
\end{algorithm}

\section{Experimental Details}
\subsection{Prompt of Agentic Reasoning for Math}

After constructing the training subset, we convert each selected DAPO-Math-17k example and each evaluation example from a direct-answer problem into an agentic tool-use episode. We preserve the original problem statement and ground-truth final answer, while adding neither reference solutions nor tool-use demonstrations. The ground-truth answer is stored separately and is never included in the policy input.

Specifically, we prepend an instruction that informs the model of the available Python sandbox and explicitly encourages tool use for calculation, case enumeration, and verification. We also append a formatting instruction requiring the final answer to appear in a \verb|\boxed{}| expression on the last line. The resulting prompt template is:

\begin{quote}\small Solve the following math problem. You are in a sandbox with a working Python interpreter---USE it: write and run Python code to do the calculations, enumerate cases, and verify your result, instead of computing by hand.

\medskip\texttt{{original problem statement}}

\medskip Please put your final answer in \verb|\boxed{}|. The last line of your response must have the form \verb|Answer: \boxed{answer}|.\end{quote}

\begin{figure*}[t]
    \centering
    \includegraphics[width=0.96\textwidth]
    {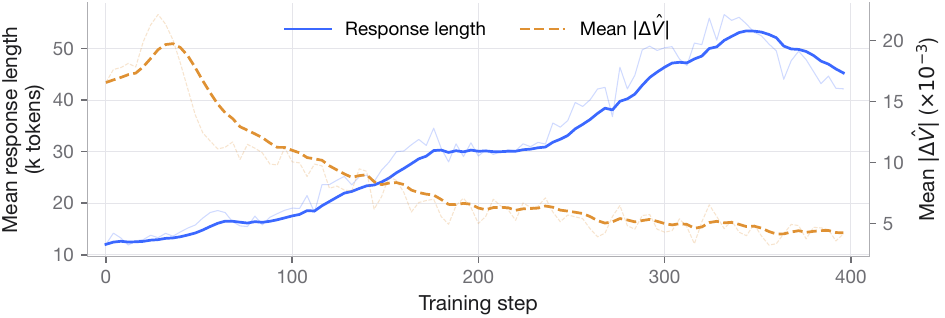}
    \caption{
        Dynamics of response length and mean
    \( |\widehat V_t-\widehat V_{t-1}| \) during PACT training.
    }
    \label{fig:credit_sparsity_dynamics}
\end{figure*}

\subsection{Further Empirical Evidence for Theorem~\ref{lem:credit_dilution}}
\label{app:further_empirical_evidence}

As shown in Figure~\ref{fig:credit_sparsity_dynamics}, responses become
progressively longer during PACT training, while the mean
\( |\widehat V_t-\widehat V_{t-1}| \) decreases. This trend is consistent with
the intuition behind Theorem~\ref{lem:credit_dilution}.

\subsection{Training Dynamics}
\label{sec:training_dynamics}

\begin{figure}[t]
    \centering
    \includegraphics[width=0.95\linewidth]
    {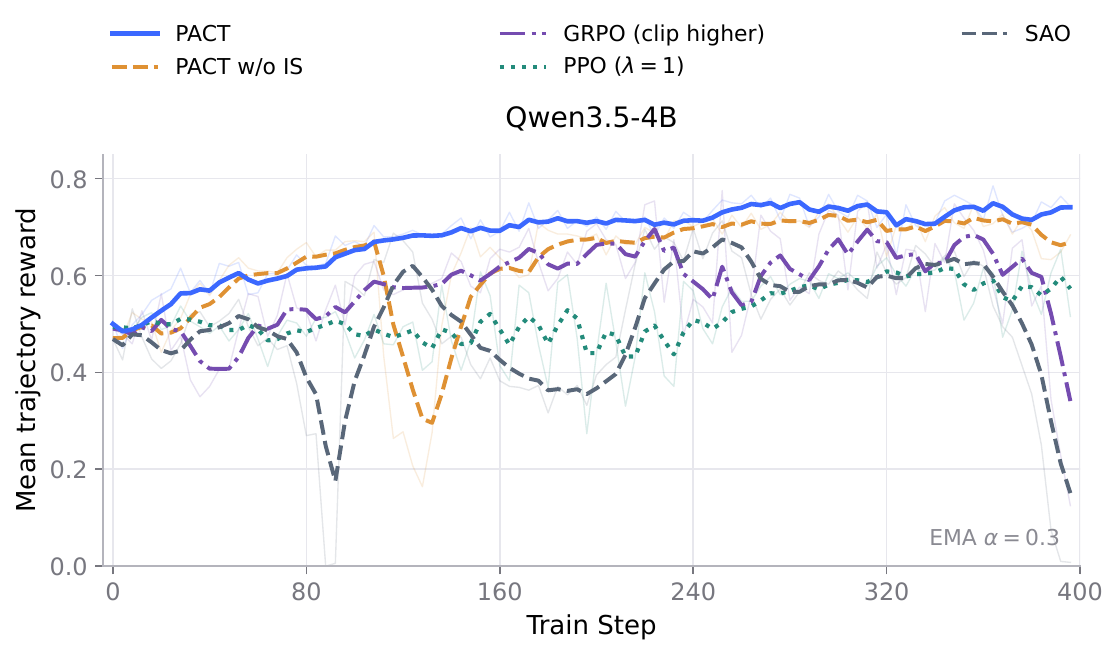}
    \caption{
        Training reward on agentic mathematical reasoning with
        Qwen3.5-4B. Faint lines show raw mean trajectory rewards,
        while bold lines show exponential moving averages
        with $\alpha=0.3$.
        Each rollout round corresponds to four training steps.
    }
    \label{fig:math_training_rewards}
\end{figure}

\begin{figure}[t]
    \centering
    \includegraphics[width=0.95\linewidth]
    {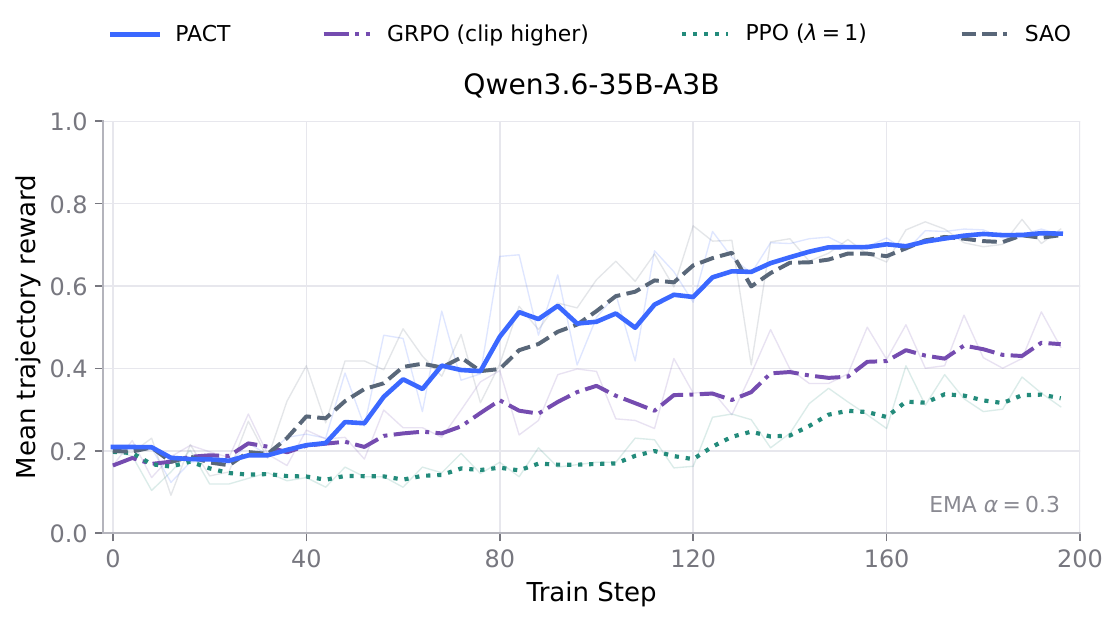}
    \caption{
        Training reward on agentic coding with Qwen3.6-35B-A3B.
        Faint lines show raw mean trajectory rewards,
        while bold lines show exponential moving averages
        with $\alpha=0.3$.
    }
    \label{fig:coding_training_rewards}
\end{figure}

Figures~\ref{fig:math_training_rewards}
and~\ref{fig:coding_training_rewards} show the evolution
of mean trajectory rewards during training on mathematical
reasoning and agentic coding, respectively.

\subsection{Key Training Hyperparameters}
\label{app:training_hyperparameters}

Table~\ref{tab:training_hyperparameters} summarizes the key
training hyperparameters for mathematical reasoning and coding.

\begin{table*}[t]
    \centering
    \caption{Key training hyperparameters.}
    \label{tab:training_hyperparameters}
    \small
    \setlength{\tabcolsep}{8pt}
    \begin{tabular}{lcc}
        \toprule
        Hyperparameter & Mathematical Reasoning & Coding \\
        \midrule
        Actor learning rate & \(1e-6\) & \(1e-6\) \\
        Critic learning rate & \(5e-6\) & \(5e-6\) \\
        \midrule
        PACT actor update & DIS & PPO clipping \\
        PACT actor DIS range & \([0.7,6.0]\) & N/A \\
        PACT critic IS acceptance range & \([0,6]\) & \([0,6]\) \\
        \bottomrule
    \end{tabular}
\end{table*}

PPO, GRPO, and SAO use rollout log-probabilities for actor
updates, whereas PACT applies TIS with a ratio range of
\([0,2]\) to account for the mismatch between training and
rollout inference.
PACT w/o critic IS retains all other settings of PACT,
including actor-side corrections, the BCE critic objective,
and the Actor-then-Critic update order.

\section{Limitations}

Our characterization of credits does not imply that all possible notions of credit must take the form derived in this work. The uniqueness result is conditional on the three proposed regularity conditions. Just as replacing Euclid's parallel postulate leads to different but internally consistent geometries, adopting different requirements for credit may lead to different representations.

Second, the notion of credit studied here is statistical rather than causal. It is defined through conditional expectations and does not characterize the counterfactual causal effect of replacing an individual token or action.

Finally, this work characterizes what token-level credit should be under the proposed conditions, but does not solve the problem of estimating it exactly in practical LLM reinforcement learning. Developing accurate and efficient estimators of token-level credit remains an important direction for future work.

\end{document}